\documentclass[american]{llncs}
\usepackage{todonotes}
\usepackage{babel}
\usetikzlibrary{shapes,shapes.geometric,backgrounds,calc,fit,positioning}

\usepackage[utf8]{inputenc}
\usepackage{graphicx}

\usepackage[ruled,vlined]{algorithm2e}

\usepackage{amsmath,amssymb}
\usepackage{mathtools}
\usepackage{commath}
\usepackage{faktor} 
\usepackage{ntheorem}
\usepackage{nicefrac}
\usepackage{enumerate}
\usepackage{paralist}

\usepackage[hidelinks]{hyperref}
\usepackage[all]{hypcap}

\usepackage[noabbrev]{cleveref}

\usepackage[backend=biber,style=numeric-comp,url=false,firstinits=true,
doi=false]{biblatex}
\addbibresource{paper.bib}

\usepackage{booktabs}
\usepackage{arydshln}

\title{Exploring Scale-Measures of Data Sets}

\date{tba 2021}

\NeedsTeXFormat{LaTeX2e}[1995/12/01]

\institute{University of Kassel, Germany}

\usepackage{fca, newdrawline}

\crefalias{problem}{prob}

\newcommand{\Scon}{\mathbb{S}}
\newcommand{\Tcon}{\mathbb{T}}

\newcommand{\Sh}{\mathfrak{S}}
\newcommand{\SH}{\underline{\mathfrak{S}}}

\DeclareMathOperator{\id}{id}

\newcommand*{\logeq}{\ratio\Leftrightarrow}

\DeclareMathOperator{\Ext}{Ext}
\DeclareMathOperator{\Int}{Int}

\DeclareMathOperator{\app}{\mid}

\usepackage{etoolbox,refcount}
\usepackage{multicol}
\newcounter{countitems}
\newcounter{nextitemizecount}
\newcommand{\setupcountitems}{%
  \stepcounter{nextitemizecount}%
  \setcounter{countitems}{0}%
  \preto\item{\stepcounter{countitems}}%
}
\makeatletter
\newcommand{\computecountitems}{%
  \edef\@currentlabel{\number\c@countitems}%
  \label{countitems@\number\numexpr\value{nextitemizecount}-1\relax}%
}
\newcommand{\nextitemizecount}{%
  \getrefnumber{countitems@\number\c@nextitemizecount}%
}
\newcommand{\previtemizecount}{%
  \getrefnumber{countitems@\number\numexpr\value{nextitemizecount}-1\relax}%
}
\makeatother    
    {\end{itemize}%
    \unskip\computecountitems\ifnumcomp{\previtemizecount}{>}{3}{\end{multicols}}{}}

\let\cref\Cref

\begin{document}

\author{Tom Hanika\inst{1,2}, Johannes Hirth\inst{1,2}}
\institute{%
  Knowledge \& Data Engineering Group,
  University of Kassel, Germany\\[0.5ex]
  \and
  Interdisciplinary Research Center for Information System Design\\
  University of Kassel, Germany\\[0.5ex]
  \email{tom.hanika@cs.uni-kassel.de, hirth@cs.uni-kassel.de}
}

\maketitle

\begin{abstract}
  Measurement is a fundamental building block of numerous scientific
  models and their creation. This is in particular true for data
  driven science. Due to the high complexity and size of modern data
  sets, the necessity for the development of understandable and
  efficient scaling methods is at hand. A profound theory for scaling
  data is scale-measures, as developed in the field of formal concept
  analysis. Recent developments indicate that the set of all
  scale-measures for a given data set constitutes a lattice and does
  hence allow efficient exploring algorithms. In this work we study
  the properties of said lattice and propose a novel scale-measure
  exploration algorithm that is based on the well-known and proven
  attribute exploration approach. Our results motivate multiple
  applications in scale recommendation, most prominently
  (semi-)automatic scaling.

  \textbf{keywords}: FCA, Conceptual~Measures, Data~Scaling,
  Measurements, Formal~Concept, Lattice

\end{abstract}

\section{Introduction}
An inevitable step of any data-based knowledge discovery process is
\emph{measurement}~\cite{pfanzagl1971theory} and the associated (explicit or
implicit) \emph{scaling} of the data~\cite{stevens1946theory}. The latter is
particularly constrained by the underlying mathematical formulation of
the data representation, e.g., real-valued vector spaces or weighted
graphs, the requirements of the data procedures, e.g., the presence of
a distance function, and, more recently, the need for human
understanding of the results.
Considering the scaling of data as part of the analysis itself, in
particular formalizing it and thus making it controllable, is a
salient feature of formal concept analysis (FCA)~\cite{fca-book}.
This field of research has spawned a variety of specialized scaling
methods, such as logical scaling~\cite{logiscale}, and in the form of
\emph{scale-measures} links the scaling process with the study of
\emph{continuous mappings} between \emph{closure systems}. 

Recent results by the authors~\cite{navimeasure} revealed that the set
of all scale-measures for a given data set constitutes a
lattice. Furthermore, it was shown that any scale-measure can be
expressed in simple propositional terms using disjunction, conjunction
and negation. Among other things, the previous results allow a
computational transition between different scale-measures, which we may call
\emph{scale-measure navigation}, as well as their
\emph{interpretability} by humans.

Despite these advances, the question of how to identify appropriate
and meaningful scale-measures for a given data set with respect to a
human data analyst and how to express that meaningfulness in the first
place remains unanswered.  In this paper, we propose an answer to this
question by adapting the well-known \emph{attribute exploration
  algorithm} from FCA to present a method for exploring scale
measures. Very similar to the original algorithm does
\emph{scale-measure exploration} inquire a (human) scaling expert for
how to aggregate, separate, omit, or introduce data set features. Our
efforts do finally result in a (semi-)automatic scaling framework
which may be applied to large and complex data sets.

In detail, after recalling scale-measure basics in~\cref{ideals} we
apply theoretical results for ideals in closure systems to the lattice
of all scale-measures. From this we derive notions for the relevance
of scale-measures as well as the mentioned novel exploration method
in~\cref{sec4}, which is supported by a detailed example. Finally,
in~\cref{sec:5}, we outline the (semi-)automatic scaling framework and
conclude in~\cref{conclusion} after revisiting related work about
scaling in~\cref{sec:relate}.

\section{Scales and Measurement}
\subsubsection*{FCA Recap}

Formalizing and understanding the process of \emph{measurement} is, in
particular in data science, an ongoing discussion, for which we refer
the reader to \emph{Representational Theory of
  Measurement}~\cite{suppes1989foundations,luce1990foundations} as
well as \emph{Numerical Relational
  Structure}~\cite{pfanzagl1971theory}, and \emph{algebraic
  (measurement) structures}~\cite[p. 253]{roberts1984measurement}.

Formal concept analysis (FCA)~\cite{Wille1982, fca-book} is well
equipped to handle and comprehend data scaling tasks. In FCA the basic
data structure is the \emph{formal contexts} as seen in the
example~\cref{fig:bj1} (top), i.e., a triple $(G,M,I)$ with non-empty and finite set
$G$ (called \emph{objects}), non-empty and finite set $M$ (called \emph{attributes})
and a binary relation $I \subseteq G \times M$ (called
\emph{incidence}). We say $(g,m) \in I$ is equivalent to ``$g$ has
attribute $m$''.  We call $\Scon=(H,N,J)$ an \emph{induced
sub-context} of $\context$, iff $H\subseteq G, N\subseteq M$ and
$I_\Scon=I\cap (H_\Scon \times N)$, and write $\Scon \leq
\context$. We find two operators $\cdot':\mathcal{P}(G)\to
\mathcal{P}(M), A\mapsto A'=\{m\in M \mid \forall a \in A:(a,m)\in
I\}$, and $\cdot':\mathcal{P}(M)\to\mathcal{P}(G), B\mapsto B'=\{g\in
G\mid \forall b\in B:(g,b)\in I\}$, called \emph{derivations}. Pairs
$(A,B) \in \mathcal{P}(G)\times \mathcal{P}(M)$ with $A'=B$ and $A =
B'$, are called \emph{formal concepts}, where $A$ is called
\emph{extent} and $B$ \emph{intent}. Consecutive application leads to
two \emph{closure spaces} $\Ext(\context)\coloneqq (G,'')$ and
$\Int(\context)\coloneqq(M,'')$. Both closure systems are represented
in the \emph{(concept) lattice}
$\BV(\context)=(\mathcal{B}(\context),\subseteq)$, where
$\mathcal{B}(\context)\coloneqq
\{(A,B)\in\mathcal{P}(G)\times\mathcal{P}(M)\mid A'=B\wedge B'=A\}$ is
the set of concepts of $\context$ and the order relation is $(A,B)\leq
(C,D)\logeq A\subseteq C$.

\begin{figure}[t]
  \label{bjice}
  \centering
    \scalebox{0.55}{
      \hspace{-1.4cm}
      \begin{cxt}
        \cxtName{}
        \att{\shortstack{has limbs\\ (L) }}
        \att{\shortstack{breast feeds\\ (BF) }}
        \att{\shortstack{needs\\ chlorophyll (Ch)}}
        \att{\shortstack{needs water\\ to live (W)}}
        \att{\shortstack{lives on\\ land (LL)}}
        \att{\shortstack{lives in\\ water (LW)}}
        \att{\shortstack{can move\\ \ (M)}}
        \att{\shortstack{monocotyledon\\ (MC) }}
        \att{\shortstack{dicotyledon\\ (DC) }}
        \obj{xx.xx.x..}{\shortstack{dog\\ \ }}
        \obj{...x.xx..}{\shortstack{fish\\ leech }}
        \obj{..xxx..x.}{\shortstack{corn\\ \ }}
        \obj{x..x.xx..}{\shortstack{bream\\ \ }}
        \obj{..xx.x.x.}{\shortstack{water\\ weeds}}
        \obj{..xxx...x}{\shortstack{bean\\ \ }}
        \obj{x..xxxx..}{\shortstack{frog\\ \ }}
        \obj{..xxxx.x.}{\shortstack{reed\\ \ }}
      \end{cxt}}  

    \scalebox{0.4}{\colorlet{mivertexcolor}{black!80}
\colorlet{jivertexcolor}{black!80}
\colorlet{vertexcolor}{black!80}
\colorlet{bordercolor}{black!80}
\colorlet{linecolor}{gray}
\tikzset{vertexbase/.style={semithick, shape=circle, inner sep=2pt, outer sep=0pt, draw=bordercolor},%
  vertex/.style={vertexbase, fill=vertexcolor!45},%
  mivertex/.style={vertexbase, fill=mivertexcolor!45},%
  jivertex/.style={vertexbase, fill=jivertexcolor!45},%
  divertex/.style={vertexbase, top color=mivertexcolor!45, bottom color=jivertexcolor!45},%
  conn/.style={-, thick, color=linecolor}%
}
\begin{tikzpicture}[scale=0.35,font=\footnotesize]
  \begin{scope} 
    \begin{scope} 
      \foreach \nodename/\nodetype/\xpos/\ypos in {%
        0/vertex/0.0/0.0,
        1/jivertex/-6.0/10.0,
        2/jivertex/6.0/10.0,
        3/divertex/-12.0/14.0,
        4/jivertex/-6.0/14.0,
        5/jivertex/5.0/15.0,
        6/jivertex/9.0/15.0,
        7/divertex/13.0/15.0,
        8/vertex/-11.0/17.0,
        9/jivertex/-5.0/17.0,
        10/vertex/0.0/18.0,
        11/mivertex/7.0/19.0,
        12/vertex/11.0/19.0,
        13/mivertex/-11.0/21.0,
        14/mivertex/-1.0/23.0,
        15/mivertex/-10.0/24.0,
        16/mivertex/10.0/24.0,
        17/mivertex/1.0/31.0,
        18/vertex/0.0/36.0
      } \node[\nodetype] (\nodename) at (\xpos, \ypos) {};
    \end{scope}
    \begin{scope} 
      \path (2) edge[conn] (6);
      \path (1) edge[conn] (4);
      \path (12) edge[conn] (17);
      \path (6) edge[conn] (11);
      \path (14) edge[conn] (18);
      \path (0) edge[conn] (2);
      \path (9) edge[conn] (14);
      \path (2) edge[conn] (5);
      \path (10) edge[conn] (17);
      \path (7) edge[conn] (12);
      \path (0) edge[conn] (1);
      \path (6) edge[conn] (12);
      \path (15) edge[conn] (18);
      \path (9) edge[conn] (15);
      \path (5) edge[conn] (14);
      \path (17) edge[conn] (18);
      \path (1) edge[conn] (10);
      \path (0) edge[conn] (7);
      \path (8) edge[conn] (17);
      \path (13) edge[conn] (15);
      \path (4) edge[conn] (13);
      \path (0) edge[conn] (3);
      \path (8) edge[conn] (13);
      \path (4) edge[conn] (9);
      \path (5) edge[conn] (11);
      \path (12) edge[conn] (16);
      \path (2) edge[conn] (10);
      \path (3) edge[conn] (8);
      \path (16) edge[conn] (18);
      \path (11) edge[conn] (16);
      \path (10) edge[conn] (14);
      \path (1) edge[conn] (8);
    \end{scope}
    \begin{scope} 
      \foreach \nodename/\labelpos/\labelopts/\labelcontent in {%
        1/below//{frog},
        2/below//{reed},
        3/below//{dog},
        3/above//{BF},
        4/below//{bream},
        5/below//{water weeds},
        6/below//{corn},
        7/below//{bean},
        7/above//{DC},
        9/below//{fish leech},
        11/above//{MC},
        13/above//{L},
        14/above//{LW},
        15/above//{M},
        16/above//{Ch},
        17/above//{LL},
        18/above//{W}
      } \coordinate[label={[\labelopts]\labelpos:{\labelcontent}}](c) at (\nodename);
    \end{scope}
  \end{scope}
\end{tikzpicture}}
  \caption{This Figure shows the \emph{Living Beings and Water}
    context in the top. Its concept lattice is displayed at the bottom and
    contains nineteen concepts.}
  \label{fig:bj1}
\end{figure}
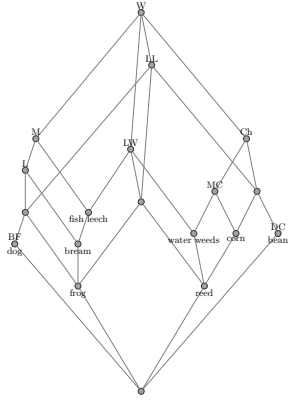

\subsection{Scales-Measures}\label{sec:motivate}
A fundamental approach to comprehensible scaling, in particular for nominal and
ordinal data as studied in this work, is the following. 

\begin{definition}[Scale-Measure (cf. Definition 91, \cite{fca-book})]
\label{def:sm}
Let $\context = (G,M,I)$ and
$\mathbb{S}=(G_{\mathbb{S}},M_{\mathbb{S}},I_{\mathbb{S}})$ be a
formal contexts. The map $\sigma :G \rightarrow G_{\mathbb{S}}$ is
called an \emph{$\mathbb{S}$-measure of $\context$ into the scale
  $\mathbb{S}$} iff the preimage
$\sigma^{-1}(A)\coloneqq \{g\in G\mid \sigma(g)\in A\}$
of every extent $A\in \Ext(\Scon)$ is an extent of $\context$.
\end{definition}

This definition resembles the idea of \emph{continuity between closure
  spaces} $(G_1,c_1)$ and $(G_2,c_2)$. We say that the map $f:G_1\to
G_2$ is \emph{continuous} if and only if $\text{for all}\
A\in\mathcal{P}(G_2) \text{ we have } c_1(f^{-1}(A))\subseteq
f^{-1}(c_2(A))$. This property is equivalent to the requirement
in~\cref{def:sm} that the preimage of closed sets is closed.

In the light of the defnition above we understand $\sigma$ as an
interpretation of the objects from $\context$ in $\Scon$. Therfore we
view the set
$\sigma^{-1}(\Ext(\Scon))\coloneqq\bigcup_{A\in\Ext(\Scon)}\sigma^{-1}(A)$
as the set of extents that is \emph{reflected} by the scale context
$\Scon$.

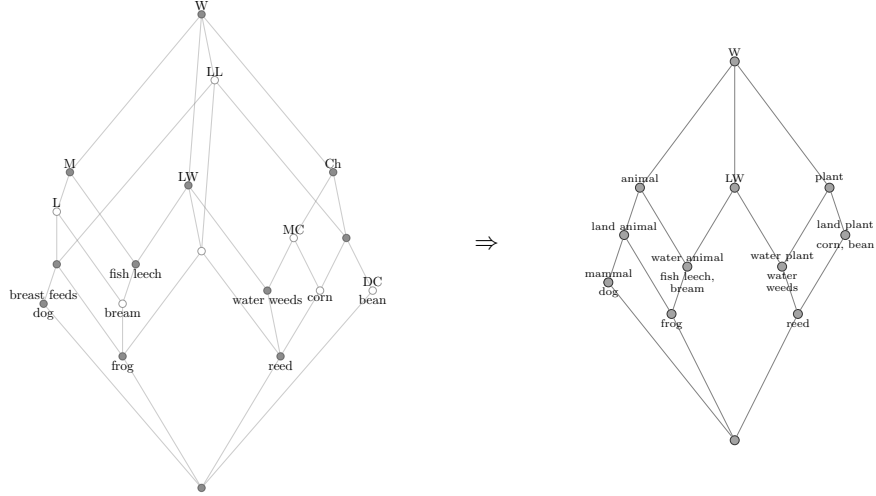
\begin{figure}[t]
  \label{bjicemeasure}
\hspace{-1cm}  \begin{minipage}{.71\linewidth}
    \scalebox{0.55}{
      \begin{cxt}
        \cxtName{}
        \att{\shortstack{W}}
        \att{\shortstack{LW}}
        \att{\shortstack{plants}} 
        \att{\shortstack{animals}} 
        \att{\shortstack{land plants}} 
        \att{\shortstack{water plants}} 
        \att{\shortstack{land animal}} 
        \att{\shortstack{water animal}} 
        \att{\shortstack{mammal}} 
        \obj{x..x..x.x}{\shortstack{dog\\ \ }}
        \obj{xx.x...x.}{\shortstack{fish\\ leech }}
        \obj{x.x.x....}{\shortstack{corn\\ \ }}
        \obj{xx.x...x.}{\shortstack{bream\\ \ }}
        \obj{xxx..x...}{\shortstack{water\\ weeds}}
        \obj{x.x.x....}{\shortstack{bean\\ \ }}
        \obj{xx.x..xx.}{\shortstack{frog\\ \ }}
        \obj{xxx.xx...}{\shortstack{reed\\ \ }}
      \end{cxt}}  
    \end{minipage}
    \begin{minipage}{.36\linewidth}
  \scalebox{0.8}{\vbox{ \begin{itemize}[]
      \item plants $\coloneqq$ Ch
      \item[] animals $\coloneqq$ M  
      \item[] land plants $\coloneqq$ LL $\wedge$ plant 
      \item[] water plants $\coloneqq$ LW $\wedge$ plant 
      \item[] land animal $\coloneqq$ LL $\wedge$ animal
      \item[] water animal $\coloneqq$ LW $\wedge$ animal
      \item[] mammal $\coloneqq$ animal $\wedge$ BF
      \end{itemize}}}
    \end{minipage}
  \begin{tikzpicture}
    \node at (0,0)
    {\scalebox{0.5}{\colorlet{mivertexcolor}{white}
\colorlet{jivertexcolor}{black}
\colorlet{vertexcolor}{black}
\colorlet{bordercolor}{black}
\colorlet{linecolor}{gray}
\tikzset{vertexbase/.style={semithick, shape=circle, inner sep=2pt, outer sep=0pt, draw=bordercolor,draw opacity=0.4},%
  vertex/.style={vertexbase, fill=vertexcolor!45},%
  mivertex/.style={vertexbase, fill=mivertexcolor!45},%
  jivertex/.style={vertexbase, fill=jivertexcolor!45},%
  divertex/.style={vertexbase, top color=mivertexcolor!45, bottom color=jivertexcolor!45},%
  conn/.style={-, thick, color=linecolor}%
}
\tikzstyle{n} = [text width=2.5cm,align=center]
\begin{tikzpicture}[scale=0.35,font=\normalsize]
  \begin{scope} 
    \begin{scope} 
      \foreach \nodename/\nodetype/\xpos/\ypos in {%
        0/vertex/0.0/0.0,
        1/vertex/-6.0/10.0,
        2/vertex/6.0/10.0,
        3/vertex/-12.0/14.0,
        4/mivertex/-6.0/14.0,
        5/vertex/5.0/15.0,
        6/mivertex/9.0/15.0,
        7/mivertex/13.0/15.0,
        8/vertex/-11.0/17.0,
        9/vertex/-5.0/17.0,
        10/mivertex/0.0/18.0,
        11/mivertex/7.0/19.0,
        12/vertex/11.0/19.0,
        13/mivertex/-11.0/21.0,
        14/vertex/-1.0/23.0,
        15/vertex/-10.0/24.0,
        16/vertex/10.0/24.0,
        17/mivertex/1.0/31.0,
        18/vertex/0.0/36.0
      } \node[\nodetype] (\nodename) at (\xpos, \ypos) {};
    \end{scope}
    \begin{scope} 
      \path (2) edge[conn,draw opacity=0.4] (6);
      \path (1) edge[conn,draw opacity=0.4] (4);
      \path (12) edge[conn,draw opacity=0.4] (17);
      \path (6) edge[conn,draw opacity=0.4] (11);
      \path (14) edge[conn,draw opacity=0.4] (18);
      \path (0) edge[conn,draw opacity=0.4] (2);
      \path (9) edge[conn,draw opacity=0.4] (14);
      \path (2) edge[conn,draw opacity=0.4] (5);
      \path (10) edge[conn,draw opacity=0.4] (17);
      \path (7) edge[conn,draw opacity=0.4] (12);
      \path (0) edge[conn,draw opacity=0.4] (1);
      \path (6) edge[conn,draw opacity=0.4] (12);
      \path (15) edge[conn,draw opacity=0.4] (18);
      \path (9) edge[conn,draw opacity=0.4] (15);
      \path (5) edge[conn,draw opacity=0.4] (14);
      \path (17) edge[conn,draw opacity=0.4] (18);
      \path (1) edge[conn,draw opacity=0.4] (10);
      \path (0) edge[conn,draw opacity=0.4] (7);
      \path (8) edge[conn,draw opacity=0.4] (17);
      \path (13) edge[conn,draw opacity=0.4] (15);
      \path (4) edge[conn,draw opacity=0.4] (13);
      \path (0) edge[conn,draw opacity=0.4] (3);
      \path (8) edge[conn,draw opacity=0.4] (13);
      \path (4) edge[conn,draw opacity=0.4] (9);
      \path (5) edge[conn,draw opacity=0.4] (11);
      \path (12) edge[conn,draw opacity=0.4] (16);
      \path (2) edge[conn,draw opacity=0.4] (10);
      \path (3) edge[conn,draw opacity=0.4] (8);
      \path (16) edge[conn,draw opacity=0.4] (18);
      \path (11) edge[conn,draw opacity=0.4] (16);
      \path (10) edge[conn,draw opacity=0.4] (14);
      \path (1) edge[conn,draw opacity=0.4] (8);
    \end{scope}
    \begin{scope} 
      \foreach \nodename/\labelpos/\labelopts/\labelcontent in {%
        1/below//{frog},
        2/below//{reed},
        3/below//{dog},
        3/above//{breast feeds},
        4/below//{bream},
        5/below//{water weeds},
        6/below//{corn},
        7/below//{bean},
        7/above//{DC},
        9/below//{fish leech},
        11/above//{MC},
        13/above//{L},
        14/above//{LW},
        15/above//{M},
        16/above//{Ch},
        17/above//{LL},
        18/above//{W}
      } \coordinate[label={[\labelopts]\labelpos:{\labelcontent}}](c) at (\nodename);
    \end{scope}
  \end{scope}
\end{tikzpicture}}};
    \node[draw opacity = 0,draw=white, text=black] at (4,0)
    {$\Rightarrow$};
    \node at (7,0)
    {\scalebox{0.6}{\colorlet{mivertexcolor}{black!80}
\colorlet{jivertexcolor}{black!80}
\colorlet{vertexcolor}{black!80}
\colorlet{bordercolor}{black!80}
\colorlet{linecolor}{gray}
\tikzset{vertexbase/.style={semithick, shape=circle, inner sep=2pt, outer sep=0pt, draw=bordercolor},%
  vertex/.style={vertexbase, fill=vertexcolor!45},%
  mivertex/.style={vertexbase, fill=mivertexcolor!45},%
  jivertex/.style={vertexbase, fill=jivertexcolor!45},%
  divertex/.style={vertexbase, top color=mivertexcolor!45, bottom color=jivertexcolor!45},%
  conn/.style={-, color=linecolor}%
}
\tikzstyle{n} = [text width=1.3cm,align=center]
\begin{tikzpicture}[scale=0.35,font=\scriptsize]
  \begin{scope} 
    \begin{scope} 
      \foreach \nodename/\nodetype/\xpos/\ypos in {%
        0/vertex/0.0/0.0,
        1/jivertex/-4.0/8.0,
        2/jivertex/4.0/8.0,
        4/divertex/-8.0/10.0,
        5/jivertex/-3.0/11.0,
        6/jivertex/3.0/11.0,
        7/mivertex/-7.0/13.0,
        8/mivertex/7.0/13.0,
        9/mivertex/-6.0/16.0,
        10/mivertex/0.0/16.0,
        11/mivertex/6.0/16.0,
        12/vertex/0.0/24.0
      } \node[\nodetype] (\nodename) at (\xpos, \ypos) {};
    \end{scope}
    \begin{scope} 
      \path (6) edge[conn] (10);
      \path (1) edge[conn] (5);
      \path (7) edge[conn] (9);
      \path (0) edge[conn] (2);
      \path (4) edge[conn] (7);
      \path (5) edge[conn] (9);
      \path (6) edge[conn] (11);
      \path (0) edge[conn] (1);
      \path (5) edge[conn] (10);
      \path (8) edge[conn] (11);
      \path (9) edge[conn] (12);
      \path (0) edge[conn] (4);
      \path (1) edge[conn] (7);
      \path (2) edge[conn] (6);
      \path (2) edge[conn] (8);
      \path (10) edge[conn] (12);
      \path (11) edge[conn] (12);
    \end{scope}
    \begin{scope} 
      \foreach \nodename/\labelpos/\labelopts/\labelcontent in {%
        12/above/n/{W},                                    
        1/below/n/{frog},
        2/below/n/{reed},
        4/below/n/{dog},
        4/above/n/{mammal}, 
        5/below/n/{fish leech, bream},
        5/above//{water animal}, 
        6/below/n/{water weeds},
        6/above//{water plant}, 
        7/above//{land animal}, 
        8/below/n/{corn, bean},
        8/above//{land plant}, 
        9/above/n/{animal}, 
        10/above/n/{LW}, 
        11/above/n/{plant} 
      } \coordinate[label={[\labelopts]\labelpos:{\labelcontent}}](c) at (\nodename);
    \end{scope}
  \end{scope}
\end{tikzpicture}}};
  \end{tikzpicture}
  \caption{A scale context (top), its concept lattice (bottom right)
    for which $\id_G$ is a scale-measure of the context in
    \cref{bjice}. The reflected extents by the scale
    $\sigma^{-1}(\Ext(\Scon))$ of the scale-measure are indicated in
    gray in the contexts concept lattice (bottem left).}
\end{figure}

We present in~\cref{bjicemeasure} the scale-context for some
scale-measure and its concept lattice, derived from our \emph{running
example} context \emph{Living Beings and Water} $\context_{\text{W}}$,
cf.~\cref{bjice}. This scaling is based on the original object set
$G$, however, the attribute set is comprised of nine, partially new,
elements, which may reflect specie taxons. We observe in this example
that the concept lattice of the scale-measure context reflects twelve out
of the nineteen concepts from $\mathfrak{B}(\context_{\text{W}})$.

In our work~\cite{navimeasure} we derived a \emph{scale-hierarchy} on
the set of scale-measures, i.e., $\Sh(\context)\coloneqq \{(\sigma,
\Scon)\mid \sigma$ is a $\Scon-$measure of $\context \}$, from a
natural order of scales introduced by Ganter and
Wille~\cite[Definition 92]{fca-book}). We say for two scale-measures
$(\sigma,\Scon),(\psi,\Tcon)$ that $(\sigma,\Scon)$ is finer then
$(\psi,\Tcon)$, iff $\psi^{-1}(\Ext(\Tcon))\subseteq \sigma^{-1}(\Ext(\Scon))$,
from which also follows a natural equivalence relation $\sim$.

\begin{definition}[Scale-Hierarchy (cf. Definition 7, \cite{navimeasure})]
  For a formal context $\context$ we call $\SH(\context) =
  (\nicefrac{\Sh(\context)}{\sim},\leq)$ the \emph{scale-hierarchy}
  $\context$.
\end{definition}

Also in~\cite{navimeasure}, we have shown that the scale-hierarchy of
a context $\context$ is lattice ordered and isomorphic to the set of
all sub-closure systems of $\Ext(\context)$, i.e., $\{Q\subseteq
\Ext(\context) \mid Q \text{ is a Closure System on }G\}$ that is
ordered by set inclusion $\subseteq$. To show this, we defined a
\emph{canonical representation} of scale-measures, using the so called \emph{canonical scale}
$\context_{\mathcal{A}}\coloneqq (G,\mathcal{A},\in)$ for
$\mathcal{A}\subseteq \Ext(\context)$
with $\Ext(\context_{\mathcal{A}})=\mathcal{A}$.

\begin{figure}[t]
  \centering
  \begin{tikzpicture}
    \draw (1,-4) to[out=30, in=-30] (1,0);
    \draw (-1,-4) to[out=150, in=-150] (-1,0);
    \draw (0.7,-3.75) to[out=45, in=-45] (0.7,-2.25);
    \draw (-0.7,-3.75) to[out=135, in=-135] (-0.7,-2.25);
    \draw (0.7,-1.75) to[out=45, in=-45] (0.7,-0.25);
    \draw (-0.7,-1.75) to[out=135, in=-135] (-0.7,-0.25);
    \node[draw opacity = 0,draw=white, text=black] at (0,0)
    {$[(\id, \context)]$};
    \node[draw opacity = 0,draw=white, text=black] at (0,-4)
    {$[(\id,\context_{\{G\}})]$};
    \node[draw opacity = 0,draw=white, text=black] at (0,-2)
    {$[(\sigma, \Scon)]$};
  \end{tikzpicture}
  \caption{Scale-hierarchy of $\context$ with indicated scale-measures.}
  \label{fig:SmAsCl}
\end{figure}

\begin{proposition}[Canonical Representation (cf. Proposition 10, \cite{navimeasure})]\label{prop:eqi-scale}
  Let $\context = (G,M,I)$ be a formal context with scale-measure $(\Scon,\sigma)\in
  \Sh(\context)$, then $(\sigma,\Scon)\sim (\id, \context_{\sigma^{-1}(\Ext(\Scon))})$.
\end{proposition}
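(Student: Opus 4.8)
The plan is to show the two scale-measures are equivalent by directly comparing the sets of reflected extents, since $(\sigma,\Scon)\sim(\id,\context_{\sigma^{-1}(\Ext(\Scon))})$ holds precisely when $\sigma^{-1}(\Ext(\Scon)) = \id^{-1}(\Ext(\context_{\sigma^{-1}(\Ext(\Scon))}))$. First I would observe that the right-hand side simplifies: $\id^{-1}(A) = A$ for every $A$, so $\id^{-1}(\Ext(\context_{\sigma^{-1}(\Ext(\Scon))})) = \Ext(\context_{\sigma^{-1}(\Ext(\Scon))})$. Hence the claim reduces to the identity $\Ext(\context_{\mathcal{A}}) = \mathcal{A}$ for the particular choice $\mathcal{A} = \sigma^{-1}(\Ext(\Scon))$, which is exactly the defining property of the canonical scale $\context_{\mathcal{A}} = (G,\mathcal{A},\in)$ recalled just before the proposition — provided $\mathcal{A}$ is in fact a closure system on $G$.

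So the real content is verifying that $\sigma^{-1}(\Ext(\Scon))$ is a closure system on $G$, and that $\id$ is a genuine scale-measure into $\context_{\sigma^{-1}(\Ext(\Scon))}$. For the closure-system property I would check the two defining conditions: (i) $G \in \sigma^{-1}(\Ext(\Scon))$, which follows because $G_{\Scon} \in \Ext(\Scon)$ and $\sigma^{-1}(G_{\Scon}) = G$ (as $\sigma$ is a map into $G_{\Scon}$); and (ii) closure under intersection, i.e. if $A, B \in \Ext(\Scon)$ then $\sigma^{-1}(A) \cap \sigma^{-1}(B) \in \sigma^{-1}(\Ext(\Scon))$ — this uses that $\Ext(\Scon)$ is intersection-closed together with the elementary set-theoretic fact $\sigma^{-1}(A)\cap\sigma^{-1}(B) = \sigma^{-1}(A\cap B)$, so the intersection is again $\sigma^{-1}$ of an extent of $\Scon$. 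Since $(\sigma,\Scon)$ is assumed to be a scale-measure, $\sigma^{-1}(A)$ is an extent of $\context$ for each $A \in \Ext(\Scon)$, so $\sigma^{-1}(\Ext(\Scon))$ is moreover a \emph{sub}-closure system of $\Ext(\context)$, which is what makes $\context_{\sigma^{-1}(\Ext(\Scon))}$ a well-defined induced structure.

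Next I would confirm $(\id,\context_{\sigma^{-1}(\Ext(\Scon))}) \in \Sh(\context)$, i.e. that $\id_G$ is a scale-measure: the preimage under $\id$ of any extent of the canonical scale is that extent itself, and by the previous paragraph every such extent lies in $\sigma^{-1}(\Ext(\Scon)) \subseteq \Ext(\context)$, hence is an extent of $\context$. Finally, $\sim$ is the equivalence induced by mutual refinement, so I would spell out both inclusions $\sigma^{-1}(\Ext(\Scon)) \subseteq \id^{-1}(\Ext(\context_{\sigma^{-1}(\Ext(\Scon))}))$ and the reverse; both collapse to the equality $\Ext(\context_{\sigma^{-1}(\Ext(\Scon))}) = \sigma^{-1}(\Ext(\Scon))$ established above, giving $(\sigma,\Scon) \sim (\id,\context_{\sigma^{-1}(\Ext(\Scon))})$. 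The only mild subtlety — the part I would treat most carefully — is the interchange $\sigma^{-1}(\bigcap_i A_i) = \bigcap_i \sigma^{-1}(A_i)$ and the observation that $\Ext(\Scon)$ need not literally equal $\sigma^{-1}(\Ext(\Scon))$ as sets (the map $\sigma$ may merge or drop objects), so one must argue at the level of closure systems on $G$ rather than transporting structure from $\Scon$ verbatim; everything else is bookkeeping against the definitions of scale-measure, canonical scale, and $\sim$.
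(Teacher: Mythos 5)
Your proof is correct: reducing the equivalence to the identity $\Ext(\context_{\sigma^{-1}(\Ext(\Scon))})=\sigma^{-1}(\Ext(\Scon))$, and then verifying that $\sigma^{-1}(\Ext(\Scon))$ is a closure system on $G$ contained in $\Ext(\context)$ (via $\sigma^{-1}(A)\cap\sigma^{-1}(B)=\sigma^{-1}(A\cap B)$ and $\sigma^{-1}(G_{\Scon})=G$) so that the canonical scale is well defined and $\id_G$ is indeed a scale-measure, is exactly the intended argument. The paper itself states this proposition without proof, importing it from the cited prior work, so there is nothing further to compare against; your write-up fills in that omitted proof in the standard way.
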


We argued in~\cite{navimeasure} that the canonical representation eludes
human explanation to some degree. To remedied this issue by means of
logical scaling \cite{logiscale} which led to to scales with logical
attributes $M_\Scon\subseteq\mathcal{L}(M,\{\wedge,\vee,\neg\})$
(\cite[Problem 1]{navimeasure}).

\begin{proposition}[Conjunctive Normalform
  (cf. Proposition 23, \cite{navimeasure})] \label{lem:appconst}
  Let $\context$ be a context, $(\sigma,\Scon)\in \Sh(\context)$. Then 
  the scale-measure  $(\psi,\Tcon)\in \Sh(\context)$ given by
  \[\psi = \id_G\quad \text{ and }\quad \Tcon = \app\limits_{A\in\sigma^{-1}(\Ext(\Scon))} (G,\{\phi =
    \wedge\ A^{I}\},I_{\phi}) \] is equivalent to $(\sigma,\Scon)$ and
  is called \emph{conjunctive normalform of} $(\sigma,\Scon)$.
  \end{proposition}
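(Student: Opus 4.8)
The plan is to prove the two assertions of the statement separately --- that $(\psi,\Tcon)$ is a scale-measure of $\context$, and that $(\psi,\Tcon)\sim(\sigma,\Scon)$ --- and both reduce to computing the extent system of $\Tcon$ explicitly as the family $\sigma^{-1}(\Ext(\Scon))$ of reflected extents.

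First I would analyse one apposition factor. Fix $A\in\sigma^{-1}(\Ext(\Scon))$ and consider $\context_A\coloneqq(G,\{\phi_A=\wedge\,A^{I}\},I_{\phi_A})$. By the meaning of the logical attribute $\phi_A$, an object $g\in G$ is incident with $\phi_A$ iff it carries every attribute in $A^{I}$, that is, iff $g\in(A^{I})^{I}$. Since $\sigma$ is a scale-measure, $A$ is an extent of $\context$, so $(A^{I})^{I}=A$, and hence the derivation of $\{\phi_A\}$ in $\context_A$ equals $A$. Therefore $\Ext(\context_A)=\{G,A\}$, namely the derivation of $\emptyset$ and the derivation of $\{\phi_A\}$, which coincide in the degenerate case $A=G$ (in particular when $A^{I}=\emptyset$).

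Next I would invoke the standard description of the apposition of contexts over a common object set: the extent system of such an apposition is the closure system on $G$ generated by the union of the extent systems of its factors, equivalently the family of all sets obtained as intersections by choosing one extent from each factor. Applying this with the index running over $\sigma^{-1}(\Ext(\Scon))$, we get that $\Ext(\Tcon)$ is the closure system on $G$ generated by $\{G\}\cup\sigma^{-1}(\Ext(\Scon))$. The decisive point is that $\sigma^{-1}(\Ext(\Scon))$ is already a closure system on $G$: forming preimages commutes with arbitrary intersections, and $\Ext(\Scon)$ is intersection-closed and contains $G_{\Scon}$, so $\sigma^{-1}(\Ext(\Scon))$ is intersection-closed and contains $\sigma^{-1}(G_{\Scon})=G$. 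Consequently the generated closure system is $\sigma^{-1}(\Ext(\Scon))$ itself, i.e.\ $\Ext(\Tcon)=\sigma^{-1}(\Ext(\Scon))$.

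From this both assertions follow at once. Since $\psi=\id_G$, we have $\psi^{-1}(B)=B$ for every $B\in\Ext(\Tcon)=\sigma^{-1}(\Ext(\Scon))$, and each such $B$ is an extent of $\context$ because $\sigma$ is a scale-measure; hence $(\psi,\Tcon)\in\Sh(\context)$. Moreover $\psi^{-1}(\Ext(\Tcon))=\Ext(\Tcon)=\sigma^{-1}(\Ext(\Scon))$, so $(\psi,\Tcon)$ and $(\sigma,\Scon)$ reflect exactly the same extents of $\context$, which by the definition of the finer-than order applied in both directions gives $(\psi,\Tcon)\sim(\sigma,\Scon)$; alternatively one combines $\Ext(\Tcon)=\Ext(\context_{\sigma^{-1}(\Ext(\Scon))})$ with \cref{prop:eqi-scale}. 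I expect the only mildly delicate step to be the careful treatment of the apposition's extent system together with the boundary cases $A^{I}=\emptyset$ and $A=G$; the rest is routine bookkeeping with the derivation operators.
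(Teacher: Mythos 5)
Your proposal is correct and follows the standard route for this statement (which the paper only cites from~\cite{navimeasure} without reproving): each conjunctive attribute $\wedge A^{I}$ has derivation $A^{II}=A$, the apposition's extent system is the intersection-closure of the factors' extents, and since $\sigma^{-1}(\Ext(\Scon))$ is already a closure system containing $G$ one gets $\Ext(\Tcon)=\sigma^{-1}(\Ext(\Scon))$, whence both the scale-measure property and the equivalence follow. Your handling of the degenerate cases $A^{I}=\emptyset$ and $A=G$ is a welcome extra care, but the argument is essentially the same as the cited one.
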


\section{Ideals in the Lattice of Closure Systems}\label{ideals}
The goal for the rest of this work is to identify \emph{outstanding}
and particurily interesting data scalings. This quest leads to the
natural question for a structural understanding of the scale-hierarchy
and its elements.  In order to do this we rely on the
isomorphism~\cite[Proposition 11]{navimeasure} between a context's
scale-hierarchy $\SH(\context)$ and the lattice of all sub-closure
systems of the extent set, as explained in the last section. The later
forms an order ideal in the lattice of all closure systems
$\mathfrak{F}_G$ on a set $G$, to which we refer by
$\downarrow_{\mathfrak{F}_G}\Ext(\context)$. This ideal is well
studied~\cite{caspard03} and we may often omit the index
$\mathfrak{F}_G$ to improve the readibility.

Equipped with this structure we have to recall a few notions and
definitions for a complete lattices $(L,\leq)$. In the following, we
denote by $\prec$ the \emph{cover relation} of $\leq$. Furthermore, we say $L$ is
\begin{inparaenum}[1)]
\item \emph{lower
    semi-modular} if and only if $\forall x,y\in L:{x\prec
  x\vee y\implies x\wedge y\prec y}$,
\item\emph{join-semidistributive} iff $\forall x,y,z\in L: {x\vee y = x\vee z} \implies
  x\vee y = x\vee (y\wedge z)$,
\item \emph{meet-distributive} (\emph{lower locally
    distributive}, cf~\cite{caspard03}) iff $L$ is join-semidistributive and lower
  semi-modular,
\item \emph{join-pseudocomplemented} iff
  $x\in L$ the set $\{y\in L \mid y\vee x =\top \}$ has a least,
\item \emph{ranked} iff there is a function $\rho:L\mapsto \mathbb{N}$
  with $x\prec y \implies \rho(x)+1=\rho(y)$,
\item \emph{atomistic} iff
  every $x\in L$ can be written as the
  join of atoms in $L$.
\end{inparaenum}
In addition to the just introduced lattice properties, there are
properties for elements in $L$ that we consider. An element $x\in L$ is 
\begin{inparaenum}[1)]
\item \emph{neutral}
iff every triple $\{x,y,z\}\subseteq L$ generates a distributive
sublattice of $L$,
\item \emph{distributive} iff the equalities
$x\vee (y\wedge z) = (x\vee y)\wedge (x\vee z)$ and
$x\wedge (y\vee z) = (x\wedge y)\vee (x\wedge z)$ for every $y,z\in L$
hold,
\item \emph{meet irreducible} iff $x\neq \top$ and $\bigwedge_{y\in
    Y}y$ for $Y\subseteq L$ implies $x\in Y$,
\item \emph{join irreducible} iff $x\neq \bot$ and $\bigvee_{y\in
    Y}y$ for $Y\subseteq L$ implies $x\in Y$,
\end{inparaenum} Throughout the rest of this work, we denote by
$\mathcal{M}(L)$ the set of all meet-irreducible elements of $L$.

We can derive from literature~\cite[Proposition 19]{caspard03}
the following statement.
\begin{corollary}\label{cor:prec}
  For $\context=(G,M,I)$,  $\downarrow\Ext(\context)\subseteq\mathfrak{F}_G$ and
  $\mathcal{R},\mathcal{R}'\in {\downarrow\Ext(\context)}$ we find the
  equivalence:  $\mathcal{R}'\prec \mathcal{R}\Longleftrightarrow
  \mathcal{R}'\cup \{A\} = \mathcal{R}$ with $A$ is meet-irreducible
  in $\mathcal{R}$
\end{corollary}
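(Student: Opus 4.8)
The plan is to prove both directions of the equivalence by exploiting the standing isomorphism between $\downarrow\Ext(\context)$ and the corresponding order ideal in the lattice of closure systems $\mathfrak{F}_G$, and then to invoke the known characterization of the cover relation in $\mathfrak{F}_G$ from \cite[Proposition 19]{caspard03}. Recall that a closure system $\mathcal{R}$ on $G$, viewed as the family of closed sets, is a subset of $\mathcal{P}(G)$ that contains $G$ and is closed under arbitrary intersection; the meet in $\mathfrak{F}_G$ is intersection of the families, while the join is generated by intersecting the union. The first observation I would record is that the order ideal $\downarrow\Ext(\context)$ is itself a lattice with the inherited order, and — crucially — covers in $\downarrow\Ext(\context)$ coincide with covers in $\mathfrak{F}_G$ whenever both endpoints lie in the ideal, since an order ideal is closed under going down and hence no intermediate closure system can be skipped.

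For the direction ($\Leftarrow$): suppose $\mathcal{R}' \cup \{A\} = \mathcal{R}$ with $A$ meet-irreducible in $\mathcal{R}$. Then $\mathcal{R}' = \mathcal{R} \setminus \{A\}$; I must first check $\mathcal{R}'$ is still a closure system, i.e. closed under intersections. Any intersection of members of $\mathcal{R}'$ equals some member of $\mathcal{R}$, and it cannot equal $A$ precisely because $A$ is meet-irreducible in $\mathcal{R}$ and none of the intersected sets is $A$ itself — so the result lies in $\mathcal{R}'$. Also $G \in \mathcal{R}'$ since $G \neq A$ (as $A \neq \top$ in $\mathcal{R}$). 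Hence $\mathcal{R}' \in \mathfrak{F}_G$, and since $\mathcal{R}' \subseteq \mathcal{R} \subseteq \Ext(\context)$ we have $\mathcal{R}' \in \downarrow\Ext(\context)$. Now $\mathcal{R}' \subsetneq \mathcal{R}$ and they differ by a single element, so nothing can sit strictly between them; thus $\mathcal{R}' \prec \mathcal{R}$.

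For the direction ($\Rightarrow$): suppose $\mathcal{R}' \prec \mathcal{R}$ in $\downarrow\Ext(\context)$. By the remark above this is also a cover in $\mathfrak{F}_G$, so \cite[Proposition 19]{caspard03} applies and yields that $\mathcal{R}$ is obtained from $\mathcal{R}'$ by adjoining a single set $A$ that is meet-irreducible in $\mathcal{R}$; in particular $\mathcal{R}' \cup \{A\} = \mathcal{R}$. The only point requiring care is the transfer of the cover relation between the two lattices and the precise form of the cited proposition — I would state it as: in $\mathfrak{F}_G$, $\mathcal{R}' \prec \mathcal{R}$ iff $\mathcal{R} = \mathcal{R}' \cup \{A\}$ for some $A$ that is meet-irreducible in $\mathcal{R}$. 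Given that, both implications are immediate.

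The main obstacle is not any deep argument but making sure the cited result from \cite{caspard03} is quoted in exactly the shape needed and that the passage to the sub-ideal $\downarrow\Ext(\context)$ is legitimate; the latter hinges only on the elementary fact that a down-set reflects and preserves the cover relation. The verification that $\mathcal{R}\setminus\{A\}$ remains intersection-closed is the one genuinely FCA-flavored step, and it is exactly where meet-irreducibility of $A$ is used.
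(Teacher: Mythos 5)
Your proposal is correct and follows essentially the same route as the paper, which proves this corollary simply by invoking \cite[Proposition 19]{caspard03} (the characterization of covers in $\mathfrak{F}_G$ by adjoining a single meet-irreducible closed set). You additionally spell out the two routine details the paper leaves implicit — that covers in the order ideal $\downarrow\Ext(\context)$ coincide with covers in $\mathfrak{F}_G$ because a down-set cannot skip intermediate elements, and that removing a meet-irreducible set from a closure system leaves a closure system — both of which are verified correctly.
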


Of special interest in lattices are the meet- and join-irreducibles,
since every element of a lattice can be represented as a join or meet
of these elements.

\begin{proposition}\label{prop:join}
  For $\context$, $\downarrow\Ext(\context)\subseteq\mathfrak{F}_G$
  and $\mathcal{R}\in \downarrow \Ext(\context)$ we find the
  equivalence: $\mathcal{R}$ join-irreducible in
  $\downarrow\Ext(\context)\Longleftrightarrow \exists A\in
  \Ext(\context)\setminus\{G\}\colon \mathcal{R}=\{G,A\}$
\end{proposition}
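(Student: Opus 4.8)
The plan is to prove both directions of the equivalence, using \cref{cor:prec} to translate "join-irreducible" into a statement about covers inside $\downarrow\Ext(\context)$.

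\medskip
\noindent\emph{($\Leftarrow$)} First I would show that any $\mathcal{R}=\{G,A\}$ with $A\in\Ext(\context)\setminus\{G\}$ is indeed an element of $\downarrow\Ext(\context)$: it is a subset of $\Ext(\context)$ and it is a closure system on $G$ (it contains $G$ and is closed under intersection, since $G\cap A=A$ and $A\cap A=A$). It is not the bottom element $\{G\}$ because $A\neq G$. To see it is join-irreducible, note that the only element strictly below $\{G,A\}$ in $\downarrow\Ext(\context)$ is $\{G\}$ itself (any sub-closure system of $\{G,A\}$ must contain $G$, hence is either $\{G\}$ or $\{G,A\}$). Thus $\{G,A\}$ covers exactly one element, so it cannot be written as a join of strictly smaller elements, i.e.\ it is join-irreducible. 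Equivalently, one may invoke \cref{cor:prec}: $A$ is trivially meet-irreducible in $\{G,A\}$ (the only meet that can produce $A$ is the empty-index intersection $G$, or $A$ itself), so $\{G\}\prec\{G,A\}$, and since $\{G\}$ is the unique lower cover, join-irreducibility follows.

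\medskip
\noindent\emph{($\Rightarrow$)} Suppose $\mathcal{R}\in\downarrow\Ext(\context)$ is join-irreducible. Then $\mathcal{R}\neq\{G\}$ (the bottom element is, by convention, not join-irreducible), so there exists some $A\in\mathcal{R}$ with $A\neq G$. I would argue that $\mathcal{R}$ has a unique lower cover: in a finite lattice, an element is join-irreducible iff it has exactly one lower cover, and $\downarrow\Ext(\context)$ is finite since $\Ext(\context)$ is. By \cref{cor:prec}, every lower cover of $\mathcal{R}$ has the form $\mathcal{R}\setminus\{B\}$ where $B$ is meet-irreducible in $\mathcal{R}$; so join-irreducibility of $\mathcal{R}$ is equivalent to $\mathcal{R}$ having exactly one meet-irreducible element. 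The key step is then to show that this forces $\mathcal{R}=\{G,A\}$ for a single $A\neq G$: I would use the fact that in any finite closure system every element is a meet of meet-irreducibles, so if $\mathcal{R}\setminus\{G\}$ had two or more elements, at least two of them would have to be meet-irreducible (an element that is a meet of \emph{other} elements of $\mathcal{R}$ is covered below, but one needs at least two meet-irreducibles to build a lattice of height $\geq 2$), contradicting uniqueness. Hence $\mathcal{R}=\{G,A\}$ with $A\in\Ext(\context)\setminus\{G\}$.

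\medskip
\noindent The step I expect to be the main obstacle is the last one in the $(\Rightarrow)$ direction: carefully arguing that "exactly one meet-irreducible" forces $|\mathcal{R}|\le 2$. The clean way is to observe that $\mathcal{R}$, as a closure system on the finite set $G$, is a finite lattice whose meet-irreducibles generate it under meets; a finite lattice with exactly one meet-irreducible element is a two-element chain, which translates back to $\mathcal{R}=\{G,A\}$. I would make sure the edge cases ($\mathcal{R}=\{G\}$, and the possibility $A=G$) are explicitly excluded so that the correspondence with the join-irreducibility convention ($x\neq\bot$) is exact.
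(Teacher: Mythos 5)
Your proof is correct, but it takes a different route from the paper's. The paper argues directly via atomisticity: for the backward direction it observes that $\{G,A\}$ has cardinality two and is therefore an atom of $\downarrow\Ext(\context)$ (hence join-irreducible), and for the forward direction it notes that any $\mathcal{R}$ not of this form satisfies $\mathcal{R}=\bigvee_{D\in\mathcal{R}\setminus\{G\}}\{D,G\}$, a join of strictly smaller atoms (or the empty join if $\mathcal{R}=\{G\}$), so $\mathcal{R}$ is join-reducible. You instead route the forward direction through \cref{cor:prec}: join-irreducibility means a unique lower cover, lower covers correspond to deleting meet-irreducible elements of the closure system $\mathcal{R}$, so $\mathcal{R}$ has exactly one meet-irreducible element, and a finite lattice with a single meet-irreducible is a two-element chain (since every element is a meet of meet-irreducibles, $|\mathcal{R}|\le 2^{|\mathcal{M}(\mathcal{R})|}$). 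Both arguments are sound; the paper's is shorter and makes the atomisticity used later in \cref{prop:props}~vi) explicit, while yours ties the result to the cover/meet-irreducible machinery of \cref{cor:prec} that the paper otherwise reserves for the meet-irreducible characterization in \cref{prop:meet}. The only soft spot in your write-up is the parenthetical ``one needs at least two meet-irreducibles to build a lattice of height $\geq 2$,'' which as stated is more of an assertion than an argument; the clean counting justification you give in your final paragraph is the one to keep.
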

\begin{proof}
  \begin{inparaenum}
  \item[$\Leftarrow$:] For $A\in \Ext(\context)\setminus\{G\}$ is
    $\{A,G\}$ a closure system on $G$ and thereby in $\downarrow
    \Ext(\context)$. Further, the set $\{A,G\}$ is of cardinality two
    and thereby an atom of $\downarrow \Ext(\context)$ and thus
    join-irreducible.
  \item[$\Rightarrow$:] By contradiction assume that ${\not \exists
    A\in\Ext(\context)\setminus\{G\}:\mathcal{R}=\{G,A\}}$, then for
    every $D\in \mathcal{R}\setminus \{G\}$ is $\{D,G\}$ an atom of
    $\downarrow\Ext(\context)$, hence, $\mathcal{R}=\bigvee_{D\in
      \mathcal{R}\setminus \{G\}}\{D,G\}$, i.e., not join-irreducible.
  \end{inparaenum}
\end{proof}

Next, we investigate the meet-irreducibles of
$\downarrow\Ext(\context)$ using a similar approach as done for
$\mathfrak{F}_G$~\cite{caspard03} based on propositional
logic. We recall, that an (object) implication for some context
$\context$ is a pair $(A,B)\in \mathcal{P}(G)\times \mathcal{P}(G)$,
shortly denoted by $A\to B$. We say $A\to B$ is valid in $\context$
iff $A'\subseteq B'$. The set $\mathcal{F}_{A,B}\coloneqq \{D\subseteq
G :A\not\subseteq B \vee B\subseteq D \}$ contains all \emph{models}
of $A\to B$.  Additionally,
$\left.\mathcal{F}_{A,B}\right|_{\Ext(\context)}\coloneqq \mathcal{F}_{A,B}
\cap \Ext(\context)$ is the set of all extents $D\in \Ext(\context)$
that are models of $A\to B$.
The set $\mathcal{F}_{A,B}$ is a closure system~\cite{caspard03}
and therefor $\left.\mathcal{F}_{A,B}\right|_{\Ext(\context)}$,
too. Furthermore, we can deduce that
$\left.\mathcal{F}_{A,B}\right|_{\Ext(\context)}\in \downarrow\Ext(\context)$.

\begin{lemma}\label{prop:meetimpcl}
  For context $\context$, ${\downarrow\Ext(\context)}\subseteq
  \mathfrak{F}_G$, $\mathcal{R}\in {\downarrow\Ext(\context)}$ with
  closure operator $\phi_{\mathcal{R}}$ we find $\mathcal{R} = \bigcap
  \{\left.\mathcal{F}_{A,B}\right|_{\Ext(\context)}\mid A,B\subseteq G \wedge
  B\subseteq \phi_{\mathcal{R}}(A)\}$.
\end{lemma}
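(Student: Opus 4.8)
The plan is to establish the set equality by proving the two inclusions separately. The only tools needed are the closure operator $\phi_{\mathcal{R}}$ of $\mathcal{R}$ (so $\phi_{\mathcal{R}}(X)=\bigcap\{R\in\mathcal{R}\mid X\subseteq R\}$, which is well defined since $\mathcal{R}$ is a closure system on $G$) and the elementary fact that $D$ lies in $\mathcal{F}_{A,B}$ exactly when $A\subseteq D$ implies $B\subseteq D$.

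For the inclusion ``$\subseteq$'' I would take an arbitrary $D\in\mathcal{R}$ and show it belongs to every set $\left.\mathcal{F}_{A,B}\right|_{\Ext(\context)}$ occurring in the intersection, i.e.\ for all $A,B\subseteq G$ with $B\subseteq\phi_{\mathcal{R}}(A)$. Since $\mathcal{R}\subseteq\Ext(\context)$ we already have $D\in\Ext(\context)$, so it suffices to verify $D\in\mathcal{F}_{A,B}$. If $A\not\subseteq D$ this is immediate; if $A\subseteq D$, then monotonicity of $\phi_{\mathcal{R}}$ together with $\phi_{\mathcal{R}}(D)=D$ (as $D\in\mathcal{R}$ is closed) gives $B\subseteq\phi_{\mathcal{R}}(A)\subseteq\phi_{\mathcal{R}}(D)=D$, as needed.

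For ``$\supseteq$'' I would take $D$ in the intersection. Noting that the pair $A=B=\emptyset$ satisfies the side condition $\emptyset\subseteq\phi_{\mathcal{R}}(\emptyset)$ and that $\left.\mathcal{F}_{\emptyset,\emptyset}\right|_{\Ext(\context)}=\Ext(\context)$, we obtain $D\in\Ext(\context)$, hence $\phi_{\mathcal{R}}(D)\in\mathcal{R}$ is defined. The key step is to feed in the pair $A=D$, $B=\phi_{\mathcal{R}}(D)$: it satisfies $B\subseteq\phi_{\mathcal{R}}(A)$ with equality, so it is an allowed index and therefore $D\in\left.\mathcal{F}_{D,\phi_{\mathcal{R}}(D)}\right|_{\Ext(\context)}$; since trivially $D\subseteq D$, the model condition forces $\phi_{\mathcal{R}}(D)\subseteq D$. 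Combined with extensivity $D\subseteq\phi_{\mathcal{R}}(D)$ this yields $D=\phi_{\mathcal{R}}(D)$, i.e.\ $D\in\mathcal{R}$.

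I do not expect a genuine obstacle here: the argument is essentially mechanical once the witnessing implications are chosen. The only points requiring care are keeping the direction of ``model of $A\to B$'' straight (so that $B\subseteq D$ has to be shown only under the hypothesis $A\subseteq D$) and observing that restricting each $\mathcal{F}_{A,B}$ to $\Ext(\context)$ does no harm, precisely because $\mathcal{R}\subseteq\Ext(\context)$ forces $\phi_{\mathcal{R}}$ to take values in $\Ext(\context)$. Alternatively, the statement can be derived from the analogous characterization for $\mathfrak{F}_G$ in~\cite{caspard03} by intersecting with $\Ext(\context)$, but the direct argument above is shorter.
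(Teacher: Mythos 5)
Your proof is correct, but it takes a different route from the paper. The paper disposes of the lemma in two lines: it quotes the known identity $\mathcal{R}=\bigcap\{\mathcal{F}_{A,B}\mid A,B\subseteq G\wedge B\subseteq\phi_{\mathcal{R}}(A)\}$ for the full lattice $\mathfrak{F}_G$ (Proposition 22 of Caspard--Monjardet) and then observes that, since $\mathcal{R}\subseteq\Ext(\context)$, intersecting both sides with $\Ext(\context)$ changes nothing on the left and turns each $\mathcal{F}_{A,B}$ into $\left.\mathcal{F}_{A,B}\right|_{\Ext(\context)}$ on the right --- exactly the alternative you sketch in your closing remark. Your main argument instead re-proves the identity from scratch by double inclusion, and it is sound: the ``$\subseteq$'' direction is the standard monotonicity argument, and the ``$\supseteq$'' direction correctly isolates the two witnessing index pairs, $(\emptyset,\emptyset)$ to force $D\in\Ext(\context)$ (equivalently, to guarantee the index family is nonempty so the intersection lands inside $\Ext(\context)$) and $(D,\phi_{\mathcal{R}}(D))$ to force $\phi_{\mathcal{R}}(D)\subseteq D$ and hence $D\in\mathcal{R}$. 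What your version buys is self-containedness and an explicit view of which implications actually pin down $\mathcal{R}$ --- the pairs $(A,\phi_{\mathcal{R}}(A))$ --- which is the mechanism exploited again in the subsequent characterization of meet-irreducibles; what the paper's version buys is brevity, at the cost of leaning on the cited result. One cosmetic point: the paper's displayed definition of $\mathcal{F}_{A,B}$ contains a typo ($A\not\subseteq B$ where $A\not\subseteq D$ is meant); you silently use the correct reading, which is the intended one.
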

\begin{proof}
  We know that $\mathcal{R} = \bigcap \{\mathcal{F}_{A,B}\mid A,B\subseteq G\wedge B\subseteq
  \phi_{\mathcal{R}}(A)\}$~\cite[Proposition 22]{caspard03}. Since
  $\mathcal{R}\subseteq \Ext(\context)$ it holds that $\mathcal{R} =
  \bigcap \{\mathcal{F}_{A,B}\mid A,B\subseteq G\wedge B\subseteq \phi_{\mathcal{R}}(A)\} \cap \Ext(\context)$ and
  thus equal to $\bigcap \{\left.\mathcal{F}_{A,B}\right|_{\Ext(\context)}\mid A,B\subseteq G\wedge B\subseteq
  \phi_{\mathcal{R}}(A)\}$.
\end{proof}

Note that for any $\mathcal{R}\in {\downarrow\Ext(\context)}$ the set
$\{\left.\mathcal{F}_{A,B}\right|_{\Ext(\context)}\mid A,B{\subseteq} G\,\wedge\,
B{\subseteq} \phi_{\mathcal{R}}(A)\}$ contains only closure systems in
$\downarrow \Ext(\context)$ and thus possibly meet-irreducible
elements of $\downarrow \Ext(\context)$.

\begin{proposition}\label{prop:meet}
  For context $\context$, $\downarrow\Ext(\context)\subseteq
  \mathfrak{F}_G$ and $\mathcal{R}\in {\downarrow\Ext(\context)}$, we
  find tfae:
  \begin{inparaenum}
  \item $\mathcal{R}$ is meet-irreducible in
    $\downarrow\Ext(\context)$
  \item $\exists A\in \Ext(\context),i\in G$ with
    $A\prec_{\Ext(\context)}
    (A\cup \{i\})''$ such that $\mathcal{R} =
    \mathcal{F}_{A,\{i\}}|_{\Ext(\context)}$
  \end{inparaenum}
\end{proposition}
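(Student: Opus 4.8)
The plan is to reduce meet-irreducibility of an element $\mathcal{R}\in{\downarrow\Ext(\context)}$ to the standard fact that, in a finite lattice, $\mathcal{R}\neq\top$ is meet-irreducible precisely when the set of elements strictly above it has a least element, i.e.\ $\mathcal{R}$ has a unique upper cover, and then to exploit \cref{cor:prec} and \cref{prop:meetimpcl}. Throughout write $B\coloneqq(A\cup\{i\})''$; note that $\mathcal{F}_{A,\{i\}}|_{\Ext(\context)}=\{E\in\Ext(\context)\mid A\subseteq E\Rightarrow i\in E\}$ has complement $\mathcal{C}\coloneqq\{E\in\Ext(\context)\mid A\subseteq E,\ i\notin E\}$ in $\Ext(\context)$, and that $B\in\mathcal{F}_{A,\{i\}}|_{\Ext(\context)}$ because $i\in B$.

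For $(2)\Rightarrow(1)$, assume $\mathcal{R}=\mathcal{F}_{A,\{i\}}|_{\Ext(\context)}$ with $A\in\Ext(\context)$ and $A\prec_{\Ext(\context)}B$. Then $i\notin A$ (otherwise $B=A''=A$), so $A\in\mathcal{C}$ and $\mathcal{R}\neq\top$. First I would check $\mathcal{R}\cup\{A\}\in{\downarrow\Ext(\context)}$: for $E\in\mathcal{R}$ we have $A\cap E=A$ when $A\subseteq E$, and otherwise $A\cap E\in\Ext(\context)$ is a model of $A\to\{i\}$ (vacuously), hence in $\mathcal{R}$; so $\mathcal{R}\cup\{A\}$ is a closure system on $G$, and as only the single set $A$ is added, $\mathcal{R}\prec\mathcal{R}\cup\{A\}$. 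For uniqueness, let $\mathcal{R}\prec\mathcal{S}$; by \cref{cor:prec}, $\mathcal{S}=\mathcal{R}\cup\{C\}$ with $C\in\mathcal{C}$. Since $B\in\mathcal{R}$ and $\mathcal{S}$ is a closure system, $C\cap B\in\mathcal{R}\cup\{C\}$; but $A\subseteq C\cap B$ and $i\notin C\cap B$ give $C\cap B\in\mathcal{C}$, so $C\cap B=C$, i.e.\ $A\subseteq C\subseteq B$, and with $A\prec_{\Ext(\context)}B$ and $i\in B\setminus C$ this forces $C=A$. Hence $\mathcal{R}$ has the unique upper cover $\mathcal{R}\cup\{A\}$, so $\mathcal{R}$ is meet-irreducible.

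For $(1)\Rightarrow(2)$, I would start from \cref{prop:meetimpcl} and split each right-hand side via $\mathcal{F}_{A,B}=\bigcap_{b\in B}\mathcal{F}_{A,\{b\}}$, exhibiting $\mathcal{R}$ as an intersection, inside ${\downarrow\Ext(\context)}$, of sets of the form $\mathcal{F}_{A,\{i\}}|_{\Ext(\context)}$; meet-irreducibility then yields $\mathcal{R}=\mathcal{F}_{A,\{i\}}|_{\Ext(\context)}$ for some $A\subseteq G$, $i\in G$. Replacing $A$ by $A''$ does not change $\mathcal{F}_{A,\{i\}}|_{\Ext(\context)}$, so we may assume $A\in\Ext(\context)$, and $i\notin A$, for otherwise $\mathcal{R}=\Ext(\context)=\top$. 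It remains to exclude an extent $C$ with $A\subsetneq C\subsetneq B$. For such a $C$ one first notes $i\notin C$ (else $B\subseteq C$), and then the identity
\[\mathcal{F}_{A,\{i\}}|_{\Ext(\context)}=\mathcal{F}_{A,C}|_{\Ext(\context)}\cap\mathcal{F}_{C,\{i\}}|_{\Ext(\context)}\]
holds: verified on an extent $E$, it uses only $A\subseteq C\subseteq B$ (if $A\subseteq E$ then $i\in E$, hence $B\subseteq E$, hence $C\subseteq E$; and if $C\subseteq E$ then $A\subseteq E$). Both factors belong to ${\downarrow\Ext(\context)}$ and strictly contain $\mathcal{R}$ — $C$ witnesses this for $\mathcal{F}_{A,C}|_{\Ext(\context)}$ and $A$ for $\mathcal{F}_{C,\{i\}}|_{\Ext(\context)}$ — which contradicts meet-irreducibility of $\mathcal{R}$. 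Therefore no such $C$ exists, i.e.\ $A\prec_{\Ext(\context)}(A\cup\{i\})''$, which is $(2)$.

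I expect the delicate point to be the uniqueness half of $(2)\Rightarrow(1)$: one must rule out every extent other than $A$ as a set that could be adjoined to $\mathcal{R}$, and the clean way I see to do this — intersecting a hypothetical candidate $C$ with $B=(A\cup\{i\})''$ and landing back in $\mathcal{C}$ — is exactly where the covering hypothesis $A\prec_{\Ext(\context)}B$ is consumed. The remaining steps (that the relevant $\mathcal{F}_{\cdot,\cdot}|_{\Ext(\context)}$ are closure systems lying in ${\downarrow\Ext(\context)}$, and the finite-lattice characterisation of meet-irreducibility) are routine.
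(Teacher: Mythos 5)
Your proof is correct and follows essentially the same route as the paper's: for $(1)\Rightarrow(2)$ both arguments start from \cref{prop:meetimpcl}, reduce to singleton conclusions, and then rule out $A\not\prec_{\Ext(\context)}(A\cup\{i\})''$ by exhibiting $\mathcal{R}$ as a proper meet, while for $(2)\Rightarrow(1)$ both use \cref{cor:prec} to show $\mathcal{R}\cup\{A\}$ is the unique upper cover. The only local difference is the witness pair in the first direction — you factor $\mathcal{R}=\mathcal{F}_{A,C}|_{\Ext(\context)}\cap\mathcal{F}_{C,\{i\}}|_{\Ext(\context)}$ for an intermediate extent $C$, whereas the paper intersects the two covers $\mathcal{R}\cup\{A\}$ and $\mathcal{R}\cup\{D\}$ — and you are somewhat more careful about the routine checks (that $\mathcal{R}\neq\top$ and that $\mathcal{R}\cup\{A\}$ really lies in $\downarrow\Ext(\context)$).
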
 
\begin{proof}
  $[1.\Rightarrow 2.]$ Due to \cref{prop:meetimpcl} we can represent
  $\mathcal{R}\in{\downarrow\Ext(\context)}$ by the equation
  $\mathcal{R} = \bigcap \{\mathcal{F}_{A,B}|_{\Ext(\context)}\mid
  A,B\subseteq G \wedge B\subseteq \phi_{\mathcal{R}}(A)\}$. Moreover,
  since $\mathcal{R}$ is meet-irreducible in $\downarrow
  \Ext(\context)$, we can infer that $\mathcal{R} \in
  \{\mathcal{F}_{A,B}|_{\Ext(\context)}\mid A,B\subseteq G \wedge
  B\subseteq \phi_{\mathcal{R}}(A)\}$. In particular there exist
  $A,B\subseteq G$ with $B\subseteq \phi_{\mathcal{R}}(A)$ such that
  $\mathcal{R}=\mathcal{F}_{A,B}|_{\Ext(\context)}$, and thus
  $\mathcal{R}=\mathcal{F}_{A'',B}|_{\Ext(\context)}$. Hence, we
  identify $A''$ by $A$ for the rest of this proof. Using the fact
  that $\mathcal{F}_{A,\{i\}}\cap
  \mathcal{F}_{A,\{j\}}=\mathcal{F}_{A,\{i,j\}}$ we can infer that
  $\mathcal{F}_{A,\{i\}}|_{\Ext(\context)}\cap
  \mathcal{F}_{A,\{i\}}|_{\Ext(\context)}=\mathcal{F}_{A,\{i,j\}}|_{\Ext(\context)}$. Therefore,
  there must exist $A,\{i\}\subseteq G$ with
  $\mathcal{R}=\mathcal{F}_{A,\{i\}}|_{\Ext(\context)}$ ($\ast$).
  
   In the case that $A = (A\cup \{i\})''$ the set
   $\mathcal{F}_{A,\{i\}}|_{\Ext(\context)}=\Ext(\context)$ and
   $\mathcal{R}$ is thereby not meet-irreducible.  Assume that
   $A\not\prec_{\Ext(\context)} (A\cup \{i\})''$, then there is a
   $D\in \Ext(\context)$ with $A\prec_{\Ext(\context)} D \subseteq
   (A\cup \{i\})''$ and $i\not\in D$. Hence $A,D \not\models A\to
   \{i\}$ (see $\ast$) and thus $A,D \not\in \mathcal{R}$. Using this,
   we construct two sets $\mathcal{R}\cup \{A\}$ and $\mathcal{R}\cup
   \{D\}$. The set $\mathcal{R}\cup \{D\}$ is closed by intersection,
   since an intersection of $D$ with an element in $\mathcal{R}$ is a
   model of $A\to i$, thus $\mathcal{R}\cup\{D\} \in\downarrow
   \Ext(\context)$. The same holds for $\mathcal{R}\cup \{A\}$
   resprectively. The intersection of $\mathcal{R}\cup \{A\}$ and
   $\mathcal{R}\cup \{D\}$ is equal to $\mathcal{R}$ which is thereby
   not meet-irreducible, a contradiction.

  $[1.\Leftarrow 2.]$ Consider a closure system $\hat{\mathcal{F}}\in
   \downarrow\Ext(\context)$ with $\hat{\mathcal{F}}$ covers
   $\mathcal{R}$ in ${\downarrow\Ext(\context)}$.  By \cref{cor:prec},
   we can represent $\hat{\mathcal{F}}=\mathcal{R} \cup \{D\}$
   ($\ast$) with $D\not\in \mathcal{R}$ and $D$ is meet-irreducible in
   $\hat{\mathcal{F}}$ (and therefore $D\in \Ext(\context)$). Due to
   $\mathcal{R}\subseteq \hat{\mathcal{F}}$ the set $(A\cup \{i\})''$
   is an element of $\hat{\mathcal{F}}$ and thereby the intersection
   $(A\cup \{i\})''\cap D \in \hat{\mathcal{F}}$. Since $D\not\in
   \mathcal{R}$, we can deduce that $D\not\models A\to i$ and therefor
   $A\subseteq D$ and $i\not\in D$.  From $A\prec_{\Ext(\context)}
   (A\cup \{i\})''$ we know that $(A\cup \{i\})''\cap D = A$. Finally,
   $D\in \hat{\mathcal{F}} \implies A\in \hat{\mathcal{F}}$, and using
   ($\ast$), we can infer that $D=A$.  Hence, $\mathcal{R} \cup \{A\}$
   is the sole upper neighbour of $\mathcal{R}$ in
   $\downarrow\Ext(\context)$ and thereby $\mathcal{R}$ is
   meet-irreducible.
\end{proof}

\cref{prop:join,prop:meet} provide a characterization of irreducible
elements in ${\downarrow}\Ext(\context)$ and thereby in the scale-hierarchy
of $\context$. Those may be of particular interest, since any element
of ${\downarrow}\Ext(\context)$ is representable by irreducible elements.

\begin{proposition}
  For context $\context$, $A,B\in \Ext(\context)$ with
  $A\prec_{\Ext(\context)} B$, then if $A$ is meet-irreducible in
  $\Ext(\context)$, follows
  $\left.\mathcal{F}_{A,B}\right|_{\Ext(\context)}$ is a maximum
  meet-irreducible element in
  $\downarrow\Ext(\context)\subseteq\mathfrak{F}_G$.
\end{proposition}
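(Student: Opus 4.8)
The plan is to establish the stronger fact that $\mathcal{F}_{A,B}|_{\Ext(\context)}$ is a \emph{coatom} of $\downarrow\Ext(\context)$, i.e.\ that it is covered by the maximum element $\Ext(\context)$; a coatom is automatically meet-irreducible (its sole upper cover is the top) and maximal among the meet-irreducibles (nothing but $\Ext(\context)$ lies strictly above it, and $\Ext(\context)$ is not meet-irreducible). Everything reduces to the set identity
\[\mathcal{F}_{A,B}|_{\Ext(\context)} \;=\; \Ext(\context)\setminus\{A\},\]
which I would prove first.

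For the inclusion $\mathcal{F}_{A,B}|_{\Ext(\context)}\subseteq\Ext(\context)\setminus\{A\}$ it suffices to note $A\notin\mathcal{F}_{A,B}|_{\Ext(\context)}$, which holds since $A\subseteq A$ but $B\not\subseteq A$ (recall $A\prec_{\Ext(\context)}B$, hence $A\subsetneq B$). For the reverse inclusion let $D\in\Ext(\context)$ with $D\neq A$; I must show $D\models A\to B$, so I may assume $A\subseteq D$, hence $A\subsetneq D$. Since extents are closed under intersection, $B\cap D\in\Ext(\context)$ and $A\subseteq B\cap D\subseteq B$. If $B\cap D = A$, then $A$ is the meet of $\{B,D\}$ with $A\notin\{B,D\}$ (as $A\neq B$ and $A\neq D$), contradicting meet-irreducibility of $A$ in $\Ext(\context)$; therefore $A\subsetneq B\cap D\subseteq B$, and since $A\prec_{\Ext(\context)}B$ this forces $B\cap D = B$, i.e.\ $B\subseteq D$. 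Thus $D\models A\to B$, completing the identity. Observe that \emph{both} hypotheses are genuinely used here: meet-irreducibility of $A$ is precisely what ensures that deleting the single extent $A$ already leaves exactly the extent-models of $A\to B$.

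Finally I would invoke \cref{cor:prec} with $\mathcal{R}' = \Ext(\context)\setminus\{A\} = \mathcal{F}_{A,B}|_{\Ext(\context)}$ — which lies in $\downarrow\Ext(\context)$, as noted just before \cref{prop:meetimpcl} — and $\mathcal{R} = \Ext(\context)$: since $\mathcal{R}'\cup\{A\} = \mathcal{R}$ and $A$ is meet-irreducible in $\mathcal{R} = \Ext(\context)$, the corollary yields $\mathcal{R}'\prec\mathcal{R}$. As $\Ext(\context)$ is the maximum of $\downarrow\Ext(\context)$, this says exactly that $\mathcal{F}_{A,B}|_{\Ext(\context)}$ is a coatom, hence a maximum meet-irreducible element of $\downarrow\Ext(\context)\subseteq\mathfrak{F}_G$. (As a cross-check with \cref{prop:meet}: for any $i\in B\setminus A$ one has $(A\cup\{i\})'' = B$ — again by $A\prec_{\Ext(\context)}B$ — and $\mathcal{F}_{A,B}|_{\Ext(\context)} = \mathcal{F}_{A,\{i\}}|_{\Ext(\context)}$, so this coatom does occur in the characterization there.) The only real obstacle is the displayed set identity; given it, the conclusion is an immediate reading of \cref{cor:prec}.
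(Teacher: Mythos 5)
Your proof is correct and follows essentially the same route as the paper's: establish the identity $\mathcal{F}_{A,B}|_{\Ext(\context)}=\Ext(\context)\setminus\{A\}$ and observe that the only element of $\downarrow\Ext(\context)$ strictly above it is $\Ext(\context)$ itself, which is not meet-irreducible. Your version is in fact more careful than the paper's, since you make explicit (via the intersection $B\cap D$) exactly where the meet-irreducibility of $A$ is needed to show that every extent properly containing $A$ contains $B$, a step the paper asserts without justification.
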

\begin{proof}
  For $A\prec_{\Ext(\context)} B$, $A$ is the only extent that that is
  not a model of implication $A\to B$, since every other superset of $A$ in
  $\Ext(\context)$ is also a superset of $B$. Hence
  $\left.\mathcal{F}_{A,B}\right|_{\Ext(\context)}$ is equal to
  $\Ext(\context)\setminus \{A\}$. The only superset in
  $\downarrow\Ext(\context)$ is $\Ext(\context)$, which is not
  meet-irreducible.
\end{proof}

Equipped with this characterization we look into counting the irreducibles.

\begin{proposition}\label{prop:meet-num}
  For context $\context$, the number of meet-irreducible elements in
  the lattice ${\downarrow\Ext(\context)}\subseteq\mathfrak{F}_G$ is
  equal to $\mid \prec_{\downarrow\Ext(\context)}\mid$.
\end{proposition}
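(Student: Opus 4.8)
\section*{Proof proposal}

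The plan is to prove the statement by exhibiting an explicit bijection between the set $\mathcal{M}(\downarrow\Ext(\context))$ of meet-irreducibles of $\downarrow\Ext(\context)$ and the covering relation of the concept lattice, i.e.\ the set of pairs $A\prec_{\Ext(\context)}B$; the cardinality on the right then counts the edges of the Hasse diagram of $(\Ext(\context),\subseteq)$. Almost all of the structural work is already contained in \cref{prop:meet}: every meet-irreducible of $\downarrow\Ext(\context)$ has the shape $\mathcal{F}_{A,\{i\}}|_{\Ext(\context)}$ for some $A\in\Ext(\context)$ and $i\in G$ with $A\prec_{\Ext(\context)}(A\cup\{i\})''$. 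So the remaining task is purely combinatorial: decide exactly when two pairs $(A,i)$ and $(A',i')$ produce the same closure system, and count the resulting classes.

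First I would define $\Phi(A\prec_{\Ext(\context)}B)\coloneqq \mathcal{F}_{A,\{i\}}|_{\Ext(\context)}$ for an arbitrary $i\in B\setminus A$ and show it is independent of this choice. The crux is the identity that, for a cover $A\prec_{\Ext(\context)}B$ and any $i\in B\setminus A$, an extent $D$ satisfies $A\subseteq D$ and $i\notin D$ if and only if $A\subseteq D$ and $B\not\subseteq D$; the nontrivial direction uses that $D\cap B$ is again an extent with $A\subseteq D\cap B\subsetneq B$, hence $D\cap B=A$ by the cover property, so $i\notin D$. This yields $\mathcal{F}_{A,\{i\}}|_{\Ext(\context)}=\Ext(\context)\setminus\{D\in\Ext(\context)\mid A\subseteq D,\ B\not\subseteq D\}$, which visibly does not depend on $i$. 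Moreover $A\subsetneq(A\cup\{i\})''\subseteq B$ forces $(A\cup\{i\})''=B$, so $A\prec_{\Ext(\context)}(A\cup\{i\})''$ holds for every such $i$, and by \cref{prop:meet} the value of $\Phi$ is indeed meet-irreducible.

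Next I would check that $\Phi$ is onto and one-to-one. Surjectivity is immediate from \cref{prop:meet}: any $\mathcal{R}\in\mathcal{M}(\downarrow\Ext(\context))$ equals some $\mathcal{F}_{A,\{i\}}|_{\Ext(\context)}$ with $A\prec_{\Ext(\context)}(A\cup\{i\})''$, hence $\mathcal{R}=\Phi\bigl(A\prec_{\Ext(\context)}(A\cup\{i\})''\bigr)$. For injectivity I would recover the pair $(A,B)$ from $\mathcal{R}=\Phi(A\prec B)$: since $\Ext(\context)\setminus\mathcal{R}=\{D\mid A\subseteq D,\ B\not\subseteq D\}$ has $A$ as its $\subseteq$-least element, $A=\min_{\subseteq}(\Ext(\context)\setminus\mathcal{R})$; and since every extent in $\mathcal{R}$ strictly above $A$ must contain $B$ while $B\in\mathcal{R}$, we get $B=\min_{\subseteq}\{C\in\mathcal{R}\mid A\subsetneq C\}$. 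Thus distinct covering pairs have distinct images, $\Phi$ is a bijection, and the two cardinalities coincide.

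The main obstacle is precisely this well-definedness/injectivity bookkeeping, concentrated in the lemma identifying ``$A\subseteq D,\ i\notin D$'' with ``$A\subseteq D,\ B\not\subseteq D$'': it is what collapses all choices $i\in B\setminus A$ to a single meet-irreducible and what lets $(A,B)$ be read back off the closure system. A second point to state carefully is which covering relation appears on the right-hand side: it is that of the concept lattice $(\Ext(\context),\subseteq)$; the covering relation of the ambient lattice $\downarrow\Ext(\context)$ itself is governed by \cref{cor:prec} and is in general strictly larger, so the matching of cardinalities is specific to $\prec_{\Ext(\context)}$.
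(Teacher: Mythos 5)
Your proof is correct and follows essentially the same route as the paper: both use \cref{prop:meet} to set up a bijection between covering pairs $A\prec_{\Ext(\context)}B$ and the meet-irreducibles $\mathcal{F}_{A,\{i\}}|_{\Ext(\context)}$, the only real difference being that you prove injectivity by reading $(A,B)$ back off the closure system (as the minimum of the complement and the minimum of the elements of $\mathcal{R}$ strictly above $A$), whereas the paper exhibits, for two distinct covering pairs, a witness extent separating the two images. Your explicit well-definedness lemma (independence of the choice of $i\in B\setminus A$, i.e.\ $\mathcal{F}_{A,\{i\}}|_{\Ext(\context)}=\mathcal{F}_{A,B}|_{\Ext(\context)}$) is used only implicitly by the paper when it silently passes from $\mathcal{F}_{A,\{i\}}$ to $\mathcal{F}_{A,B}$, and your observation that the relation being counted is $\prec_{\Ext(\context)}$ rather than $\prec_{\downarrow\Ext(\context)}$ as printed in the proposition is exactly how the paper's own proof reads the statement.
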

\begin{proof}
  According to \cref{prop:meet}, an element
  $\mathcal{R}\in \downarrow\Ext(\context)$ is meet-irreducible iff it
  can be represented as $\mathcal{F}_{A,\{i\}}|_{\Ext(\context)}$
  for some $A\in \Ext(\context)$ with
  $A\prec_{\Ext(\context)} (A\cup \{i\})''$. Hence the number of
  meet-irreducible elements is bound by the number of covering pairs
  $A\prec_{\Ext(\context)} B$ in $\Ext(\context)$.  It remains to be
  shown that for $\mathcal{R}$ there is only one pair
  $(A,B)\in \prec_{\Ext(\context)}$ with $B = (A\cup \{i\})''$ for some $i\in B\setminus A$ such that
  $\mathcal{R} = \mathcal{F}_{A,\{i\}}\mid_{\Ext(\context)}$.  Assume
  there are $(A,B),(C,D)\in \prec_{\Ext(\context)}$ with
  $(A,B)\neq (C,D)$ and
  $\mathcal{F}_{A,B}\mid_{\Ext(\context)}=
  \mathcal{F}_{C,D}\mid_{\Ext(\context)}$. First, consider the case
  $A\neq C$. Without loss of
  generality let $A\not\subseteq C$, then we have $C\models A\to B$, but
  $C\not\models C\to D$. Therefore
  $C\in \mathcal{F}_{A,B}\mid_{\Ext(\context)}$ but
  $C\not\in \mathcal{F}_{C,D}\mid_{\Ext(\context)}$. In the second case, $A=C$, we have
  $B\neq D$ and thus $B\not\models C\to D$, but
  $B\models A\to B$. This implies that
  $B\in \mathcal{F}_{A,B}\mid_{\Ext(\context)}$ but
  $B\not\in \mathcal{F}_{C,D}\mid_{\Ext(\context)}$. Thus,
  $\mathcal{F}_{A,B}\mid_{\Ext(\context)}\neq
  \mathcal{F}_{C,D}\mid_{\Ext(\context)}$.
\end{proof}

Next, we turn ourselfs to other lattice properties of
$\downarrow\Ext(\context)$ and its elements.

\begin{lemma}[Join Complement]\label{lem:comp}
  For $\context$, $\downarrow\Ext(\context)\subseteq \mathfrak{F}_G$
  and $\mathcal{R}\in {\downarrow\Ext(\context)}$, the set
  $\hat{\mathcal{R}} = \bigvee_{A\in \mathcal{M}(\Ext(\context))\setminus
    \mathcal{M}(\mathcal{R})} \{A, G\}$ is the inclusion minimum
  closure-system for which
  $\mathcal{R}\vee\hat{\mathcal{R}}=\Ext(\context)$.
\end{lemma}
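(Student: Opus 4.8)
The plan is to show two things: first, that $\hat{\mathcal{R}}$ as defined does satisfy $\mathcal{R} \vee \hat{\mathcal{R}} = \Ext(\context)$, and second, that any closure system $\mathcal{Q} \in {\downarrow\Ext(\context)}$ with $\mathcal{R} \vee \mathcal{Q} = \Ext(\context)$ must contain $\hat{\mathcal{R}}$. Since the join in ${\downarrow\Ext(\context)}$ is intersection of closure systems (the closure system generated by the union of the closed-set families), and $\Ext(\context)$ is the top element, the equation $\mathcal{R} \vee \mathcal{Q} = \Ext(\context)$ says exactly that the smallest closure system containing both $\mathcal{R}$ and $\mathcal{Q}$ is all of $\Ext(\context)$.

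For the first part, recall from \cref{cor:prec} that climbing the cover relation in ${\downarrow\Ext(\context)}$ adds exactly one meet-irreducible extent at a time, and that the meet-irreducibles of a closure system on $G$ are the sets $A \in \mathcal{R}$ such that $A = \bigcap\{B \in \mathcal{R} \mid B \supsetneq A\}$ fails, i.e.\ $A$ has a unique cover in $\mathcal{R}$. The key observation is that $\Ext(\context) = \bigvee_{A \in \mathcal{M}(\Ext(\context))} \{A, G\}$, because every closure system is the meet-irreducible-generated join of its atoms of the form $\{A,G\}$ — indeed every extent is an intersection of meet-irreducible extents, so the closure system generated by all the $\{A,G\}$ with $A \in \mathcal{M}(\Ext(\context))$ is closed under exactly the intersections needed to recover all of $\Ext(\context)$. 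Splitting this join as $\bigl(\bigvee_{A \in \mathcal{M}(\mathcal{R})}\{A,G\}\bigr) \vee \bigl(\bigvee_{A \in \mathcal{M}(\Ext(\context)) \setminus \mathcal{M}(\mathcal{R})}\{A,G\}\bigr)$ and noting that the first of these two joins is contained in $\mathcal{R}$ (since $\mathcal{R}$ contains each of its own meet-irreducibles, hence each atom $\{A,G\}$ below it), we get $\Ext(\context) = \mathcal{R} \vee \hat{\mathcal{R}}$ after observing the reverse inclusion is trivial.

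For the minimality part, suppose $\mathcal{Q} \in {\downarrow\Ext(\context)}$ satisfies $\mathcal{R} \vee \mathcal{Q} = \Ext(\context)$; I must show $\{A,G\} \subseteq \mathcal{Q}$, equivalently $A \in \mathcal{Q}$, for every $A \in \mathcal{M}(\Ext(\context)) \setminus \mathcal{M}(\mathcal{R})$. Since $\mathcal{R} \vee \mathcal{Q}$ is the intersection-closure of $\mathcal{R} \cup \mathcal{Q}$ and equals $\Ext(\context)$, in particular $A$ is an intersection $A = \bigcap_j R_j \cap \bigcap_k Q_k$ with $R_j \in \mathcal{R}$, $Q_k \in \mathcal{Q}$, all supersets of $A$ (drop any that are not). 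Because $A$ is meet-irreducible in $\Ext(\context)$ it has a unique upper cover $A^+$ in $\Ext(\context)$, and any extent properly containing $A$ contains $A^+$; hence if all the $R_j$ and $Q_k$ properly contained $A$ their intersection would contain $A^+ \supsetneq A$, contradiction. So one of them equals $A$. If it is some $Q_k$ we are done. If instead $A = R_j \in \mathcal{R}$, I claim this forces $A \in \mathcal{M}(\mathcal{R})$, contradicting the choice of $A$: since $A \in \mathcal{R}$ and $\mathcal{R} \subseteq \Ext(\context)$, every element of $\mathcal{R}$ strictly above $A$ lies strictly above $A$ in $\Ext(\context)$, hence above $A^+$, so $A$ has $A^+$ as its unique cover within $\mathcal{R}$ as well (or is the top), making $A$ meet-irreducible in $\mathcal{R}$.

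The main obstacle is the bookkeeping around meet-irreducibility: one must be careful that "$A$ meet-irreducible in $\Ext(\context)$ and $A \in \mathcal{R}$" really does give "$A$ meet-irreducible in $\mathcal{R}$", which uses that $\mathcal{R}$ is a sub-closure-system and so inherits the cover structure above $A$ faithfully (every cover of $A$ in $\mathcal{R}$ is a cover in $\Ext(\context)$, since intermediate extents would have to lie in $\mathcal{R}$ only if... — actually the cleaner statement is that the unique $\Ext(\context)$-cover $A^+$ of $A$, if it lies in $\mathcal{R}$, is the unique $\mathcal{R}$-cover, and if $A^+ \notin \mathcal{R}$ then $A$'s cover in $\mathcal{R}$ is still unique, being the least element of $\mathcal{R}$ strictly above $A$, all such elements being comparable since they all contain $A^+$). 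Once this is pinned down, everything else is the routine observation that joins in ${\downarrow\Ext(\context)}$ are intersection-closures of unions and that $\Ext(\context)$ is atomistically generated by the $\{A,G\}$ over $A \in \mathcal{M}(\Ext(\context))$.
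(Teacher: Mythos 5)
Your proof is correct and follows essentially the same route as the paper's: both reduce the statement to the fact that a subfamily of $\Ext(\context)$ generates it under intersection if and only if it contains every meet-irreducible extent, so that any $\mathcal{Q}$ with $\mathcal{R}\vee\mathcal{Q}=\Ext(\context)$ must contain $\mathcal{M}(\Ext(\context))\setminus\mathcal{M}(\mathcal{R})$ and hence its closure $\hat{\mathcal{R}}$. The paper simply cites this generating property, whereas you re-derive it from the unique-upper-cover characterization (including the observation that a meet-irreducible of $\Ext(\context)$ lying in $\mathcal{R}$ is automatically meet-irreducible in $\mathcal{R}$, a detail the paper glosses over); the substance is identical.
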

\begin{proof}
  A set $\mathcal{A}\subseteq\Ext(\context)$ is a generator of
  $\Ext(\context)$ iff all meet-irreducible elements of
  $\Ext(\context)$ are in $\mathcal{A}$. Hence, for every
  $\mathcal{D}\in \downarrow\Ext(\context)$ with $\mathcal{R}\vee
  \mathcal{D} = \Ext(\context)$, we have $D$ is a superset of
  $\mathcal{M}(\Ext(\context))\setminus \mathcal{M}(\mathcal{R})$ and
  thus of $\hat{R}$, since $\hat{R}$ it is the closure of
  $\mathcal{M}(\Ext(\context))\setminus \mathcal{M}(\mathcal{R})$ in
  $\downarrow\Ext(\context)$.
\end{proof}

All the above result in the following statement about $\downarrow\Ext(\context)$:

\begin{proposition}\label{prop:props}
  For context $\context$, the lattice
  $\downarrow\Ext(\context)\subseteq \mathfrak{F}_G$:
\begin{multicols}{2}
  \begin{enumerate}[i)]
  \item is join-semidistributive
  \item is lower semi-modular
  \item is meet-distributive
  \item is join-pseudocomplemented
  \item is ranked
  \item is atomistic
  \end{enumerate}
\end{multicols}
\end{proposition}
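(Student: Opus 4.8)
The plan is to derive all six properties by transferring structure from the ambient lattice $\mathfrak{F}_G$ together with the results already obtained for $\downarrow\Ext(\context)$. The preliminary observation I would make first is that $\downarrow\Ext(\context)$ is a principal order ideal of $\mathfrak{F}_G$, so that meets, joins and the cover relation computed inside $\downarrow\Ext(\context)$ coincide with those of $\mathfrak{F}_G$: for $\mathcal{A},\mathcal{B}\in\downarrow\Ext(\context)$ the intersection $\mathcal{A}\cap\mathcal{B}$ lies below $\Ext(\context)$, the closure of $\mathcal{A}\cup\mathcal{B}$ under intersection stays inside $\Ext(\context)$ because $\Ext(\context)$ is itself a closure system containing $\mathcal{A}\cup\mathcal{B}$, and no element strictly between two members of the ideal can fall outside it.

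Given this, items i), ii) and iii) become pure transfer statements: $\mathfrak{F}_G$ is meet-distributive (classical, cf.\ \cite{caspard03}), hence join-semidistributive and lower semi-modular; both of these are formulated solely in terms of $\vee$, $\wedge$ and $\prec$, so by the observation above they descend to $\downarrow\Ext(\context)$, and meet-distributivity is their conjunction by definition. For v) I would invoke \cref{cor:prec}: every cover in $\downarrow\Ext(\context)$ adjoins exactly one set, so $\mathcal{R}\mapsto|\mathcal{R}|$ is a rank function, which is finite-valued since $G$ is finite. For vi) I would use \cref{prop:join}, which says the atoms of $\downarrow\Ext(\context)$ are precisely the two-element closure systems $\{G,A\}$ with $A\in\Ext(\context)\setminus\{G\}$; then, for any $\mathcal{R}$, the join $\bigvee_{A\in\mathcal{R}\setminus\{G\}}\{G,A\}$ is the least closure system containing $\mathcal{R}$, namely $\mathcal{R}$ itself, so the lattice is atomistic.

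Property iv) is the only one that cannot be inherited, because passing to an order ideal replaces the top element $\mathcal{P}(G)$ of $\mathfrak{F}_G$ by $\Ext(\context)$. For it I would appeal directly to \cref{lem:comp}: it produces, for each $\mathcal{R}\in\downarrow\Ext(\context)$, a closure system $\hat{\mathcal{R}}$ that is inclusion-least among those with $\mathcal{R}\vee\hat{\mathcal{R}}=\Ext(\context)$, which is exactly a least element of $\{\mathcal{Q}\in\downarrow\Ext(\context)\mid \mathcal{Q}\vee\mathcal{R}=\Ext(\context)\}$. Since the whole argument merely recombines \cref{cor:prec}, \cref{prop:join}, \cref{lem:comp} and the known meet-distributivity of $\mathfrak{F}_G$, there is no genuinely difficult step; the only points requiring care are to spell out the ``meets, joins and covers agree with $\mathfrak{F}_G$'' observation precisely enough to legitimate the transfer used for i)--iii) and v), and to recognise that iv) must be routed through \cref{lem:comp} rather than simply inherited.
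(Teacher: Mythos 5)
Your proposal is correct and follows essentially the same route as the paper: i)--iii) by transferring (join-semi)distributivity and lower semi-modularity from $\mathfrak{F}_G$ down the order ideal, iv) via \cref{lem:comp}, v) via the cardinality rank function, and vi) via the atom characterization in \cref{prop:join}. Your explicit justification that meets, joins and covers in $\downarrow\Ext(\context)$ agree with those of $\mathfrak{F}_G$ is in fact more careful than the paper's terse ``the meet and join operations are closed in $\downarrow\Ext(\context)$'', and your use of \cref{cor:prec} for v) is an equivalent variant of the paper's appeal to the rankedness of $\mathfrak{F}_G$.
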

\begin{proof}
  \begin{inparaenum}[i)]
  \item According to \cite[Corollary 30]{caspard03} $\mathfrak{F}_{G}$
    is join-semidistributive and therefor $\downarrow \Ext(\context)$
    too, since the meet and join operations of $\mathfrak{F}_{G}$ are closed in $\downarrow
    \Ext(\context)$.
  \item Analogue to i).
  \item Follows from i) and ii) (cf. Definition 15 (5) \cite{caspard03}).
  \item The join-complement of any $\mathcal{R}\in {\downarrow
    \Ext(\context)}$ is given by $\hat{\mathcal{R}}$ according to \cref{lem:comp}.
  \item The lattice $\mathfrak{F}_G$ is ranked by the cardinality
    function (cf.\cite[Corollary 30]{caspard03}). Since
    ${\downarrow \Ext(\context)}$ is an order ideal in $\mathfrak{F}_G$, it
    is ranked by the same function.
  \item Follows directly from the characterization of
    join-irreducibles in \cref{prop:join}.
  \end{inparaenum}
\end{proof}

This result can be employed for the recommendation of scale-measures,
in particular with respect to Libkins decomposition
theorem~\cite[Theorem 1]{Libkin93}. This would allow for a
divide-and-conquer procedure within the scale-hierarchy, based on the
fact: for context $\context$ the lattice
$\downarrow\Ext(\context)\subseteq \mathfrak{F}_G$ is decomposable
into the direct product of two lattices $\downarrow\Ext(\context)\sim
L_1\times L_2$ iff $L_1= (n],L_2= (\overline{n}]$ and $n$ is neutral
    in $\downarrow\Ext(\context)$. Here $\overline{n}$ indicates the
    complement of $n$ with respect to $\downarrow\Ext(\context)$,
    which can be computed using \cref{lem:comp}. That this approach is
    reasonable can be drawn from the fact that
    $\downarrow\Ext(\context)$ fullfils all requirements of Lemma 2
    and Theorem 1 from Libkin's work~\cite{libkin1995direct,Libkin93}
    by considering~\cref{prop:props}.
    
In the rest of this section we investigate distributive and neutral elements in
$\downarrow\Ext(\context)$ more deeply. For this, let $\psi,\phi\in
\Phi(L)$, i.e., the set of all closure operators on lattice $L$. We
say that $\phi \leq_{\Phi} \psi$ iff for all $x\in L:
\phi(x)\leq_{\Phi} \psi(x)$.

\begin{lemma}\label{lem:dual-iso}
  For context $\context$,
  $\downarrow\Ext(\context)\subseteq\mathfrak{F}_G$ and
  $\Phi(\Ext(\context))$, we find that the map
  $i:\downarrow\Ext(\context) \mapsto \Phi(\Ext(\context))$ with
  $i(\mathcal{A})\to \phi_{\mathcal{A}}|_{\Ext(\context)}$ is a
  dual-isomorphism.
\end{lemma}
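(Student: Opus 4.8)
The plan is to recognise the statement as the finite instance of the classical correspondence between closure operators and closure systems of a complete lattice: I will show that $i$ is a well-defined order-reversing bijection whose inverse is also order-reversing. The only structural fact I repeatedly use is that $\Ext(\context)$ is a \emph{finite} lattice, so its arbitrary meets exist and coincide with set intersection in $G$, and its top element is $G$.

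First I would check that $i$ is well-defined. For $\mathcal{A}\in\downarrow\Ext(\context)$ and $X\in\Ext(\context)$ the family $\{A\in\mathcal{A}\mid X\subseteq A\}$ is non-empty since $G\in\mathcal{A}$, and as $\mathcal{A}$ is intersection-closed its intersection $\phi_{\mathcal{A}}(X)$ lies in $\mathcal{A}\subseteq\Ext(\context)$; hence $\phi_{\mathcal{A}}|_{\Ext(\context)}$ is a self-map of $\Ext(\context)$, and being the restriction of the closure operator $\phi_{\mathcal{A}}$ on $\mathcal{P}(G)$ to a subposet it maps into itself, it is monotone, extensive and idempotent, i.e. lies in $\Phi(\Ext(\context))$; moreover its fixed-point set is exactly $\mathcal{A}$. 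Injectivity of $i$ is then immediate, since $\mathcal{A}$ is recovered from $i(\mathcal{A})$ as its set of fixed points. For surjectivity, given $\psi\in\Phi(\Ext(\context))$ I set $\mathcal{A}_{\psi}\coloneqq\{X\in\Ext(\context)\mid\psi(X)=X\}$; because the image of a closure operator on a complete lattice is closed under meets and contains the top, $\mathcal{A}_{\psi}$ is $\bigcap$-closed and contains $G$, so $\mathcal{A}_{\psi}\in\downarrow\Ext(\context)$, and a short computation gives $\phi_{\mathcal{A}_{\psi}}=\psi$ on $\Ext(\context)$ ($\phi_{\mathcal{A}_{\psi}}(X)\subseteq\psi(X)$ since $\psi(X)$ is among the sets intersected, and conversely every $A\in\mathcal{A}_{\psi}$ with $X\subseteq A$ satisfies $\psi(X)\subseteq\psi(A)=A$).

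It then remains to verify order reversal in both directions. If $\mathcal{A}\subseteq\mathcal{B}$, then for every $X\in\Ext(\context)$ the intersection defining $\phi_{\mathcal{B}}(X)$ runs over a larger family than that defining $\phi_{\mathcal{A}}(X)$, so $\phi_{\mathcal{B}}(X)\subseteq\phi_{\mathcal{A}}(X)$ and hence $i(\mathcal{B})\leq_{\Phi}i(\mathcal{A})$. Conversely, if $i(\mathcal{B})\leq_{\Phi}i(\mathcal{A})$, then for any $A\in\mathcal{A}$ we get $A=\phi_{\mathcal{A}}(A)\supseteq\phi_{\mathcal{B}}(A)\supseteq A$, forcing $A\in\mathcal{B}$, so $\mathcal{A}\subseteq\mathcal{B}$. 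Combined with the bijection from the previous paragraph, this establishes that $i$ is a dual-isomorphism.

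The one genuinely non-routine point — and the step I would be most careful about — is the bookkeeping about \emph{which} lattice is involved: one must state explicitly that $\Phi(\Ext(\context))$ denotes closure operators on the lattice $\Ext(\context)$ rather than on $\mathcal{P}(G)$, and that since $\Ext(\context)$ is a finite intersection-closed family its lattice meets are honest intersections in $G$. That observation is exactly what makes $\phi_{\mathcal{A}}$ restrict to a self-map of $\Ext(\context)$ and what makes the fixed-point set of an element of $\Phi(\Ext(\context))$ a genuine closure system on $G$ lying in $\downarrow\Ext(\context)$; once this is pinned down, the remaining arguments are the standard closure-operator/closure-system correspondence and are essentially mechanical.
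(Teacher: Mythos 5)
Your proof is correct and follows essentially the same route as the paper's: the standard closure-system/closure-operator correspondence, with injectivity read off from the fixed points, surjectivity obtained by sending an operator to its fixed-point (equivalently, image) set, and order reversal by comparing the families being intersected. If anything, your version is slightly more complete, since the paper checks order reversal only on covering pairs and only in one direction, whereas you verify that both $i$ and $i^{-1}$ reverse order, which is what a dual-isomorphism actually requires.
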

\begin{proof}
  For $\mathcal{A},\mathcal{D}\in \downarrow\Ext(\context)$ with
  $A\in \mathcal{A},A\not\in \mathcal{D}$ is $i(\mathcal{A})(A)=A$ but
  $i(\mathcal{D})(A)\neq A$. Thus $i(\mathcal{A})\neq i(\mathcal{D})$
  and $i$ injective. For $\phi\in \Phi(\Ext(\context))$ is
  $\phi[\Ext(\context)]\subseteq\Ext(\context)$ a closure system
  with $G\in \phi[\Ext(\context)]$ an therefor
  $\phi[\Ext(\context)]\in \downarrow\Ext(\context)$ with
  $i(\phi[\Ext(\context)]) = \phi$. Hence $i$ is bijective. For
  $\mathcal{A},\mathcal{D}\in \downarrow\Ext(\context)$ with
  $\mathcal{A}\prec_{\downarrow\Ext(\context)}\mathcal{D}$ is
  $\mathcal{A}\cup \{D\}=\mathcal{D}$ for $D$ meet-irreducible in
  $\mathcal{D}$ (\cref{cor:prec}). Thus for all $A\in \Ext(\context)$ is
  $i(\mathcal{A})(A) = i(\mathcal{D})(A)$ except for the pre-images of
  $D$, i.e., $i(\mathcal{D})^{-1}(D)$. For
  $A\in i(\mathcal{D})^{-1}(D)$ is
  $i(\mathcal{D})(A) = D \subseteq i(\mathcal{A})(A)$ and thus
  $i(\mathcal{D})\leq i(\mathcal{A})$, as required.
\end{proof}

\begin{corollary}\label{rem:neutral}
  For a context $\context$,
  $\downarrow\Ext(\context)\subseteq \mathfrak{F}_G$ and
  $\mathcal{R}\in \downarrow\Ext(\context)$ tfae:
  \begin{inparaenum}[i)]
  \item $\mathcal{R}$ is distributive
  \item $\mathcal{R}$ is neutral
  \item For $A,B,C\in\mathcal{R}$ with $C = A \wedge B$ and $A,B$
    incomparable in $\Ext(\context)$, we have $A\in\mathcal{R}\vee
    B\in\mathcal{R}\vee C\in \mathcal{R}$ implies $A,B,C\in\mathcal{R}$.
  \end{inparaenum}

\end{corollary}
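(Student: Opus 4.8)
I propose to prove the cycle $ii)\Rightarrow i)\Rightarrow iii)\Rightarrow ii)$, reading the hypothesis of $iii)$ as ``$A,B,C\in\Ext(\context)$'' (with ``$\in\mathcal{R}$'' the implication would be vacuous). Throughout I use that in $\downarrow\Ext(\context)$ the meet is $\cap$, that the join is $\mathcal{X}\vee\mathcal{Y}=\{X\cap Y\mid X\in\mathcal{X},\,Y\in\mathcal{Y}\}$ (the smallest closure system containing $\mathcal{X}\cup\mathcal{Y}$), and that the atoms are exactly the join-irreducibles $\{G,X\}$ with $X\in\Ext(\context)\setminus\{G\}$ (\cref{prop:join}). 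The implication $ii)\Rightarrow i)$ is immediate: a neutral element lies in a distributive sublattice together with any two further elements, hence satisfies both distributive identities.

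For $i)\Rightarrow iii)$ I argue contrapositively. If $iii)$ fails, there are incomparable $A,B\in\Ext(\context)$ with $C:=A\cap B$ such that $\{A,B,C\}$ meets $\mathcal{R}$ but is not contained in it; since $\mathcal{R}$ is $\cap$-closed, only two cases remain: $(a)$ exactly one of $A,B$ lies in $\mathcal{R}$, or $(b)$ $A,B\notin\mathcal{R}$ while $C\in\mathcal{R}$. Plugging suitable atoms into a distributive identity refutes $i)$. In case $(b)$ and in case $(a)$ when $C\in\mathcal{R}$, take $\mathcal{A}=\{G,A\}$, $\mathcal{B}=\{G,B\}$: then $\mathcal{A}\vee\mathcal{B}=\{G,A,B,C\}$, and a direct computation shows $\mathcal{R}\cap(\mathcal{A}\vee\mathcal{B})\neq(\mathcal{R}\cap\mathcal{A})\vee(\mathcal{R}\cap\mathcal{B})$. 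In case $(a)$ with $C\notin\mathcal{R}$, say $A\in\mathcal{R}$ and $B\notin\mathcal{R}$, take $\mathcal{A}=\{G,B\}$, $\mathcal{B}=\{G,C\}$, so $\mathcal{A}\wedge\mathcal{B}=\{G\}$; then $C$ lies in $(\mathcal{R}\vee\mathcal{A})\cap(\mathcal{R}\vee\mathcal{B})$ (because $C=A\cap B$ and $A\in\mathcal{R}$) but not in $\mathcal{R}=\mathcal{R}\vee(\mathcal{A}\wedge\mathcal{B})$, refuting the dual identity.

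For $iii)\Rightarrow ii)$ I invoke the standard characterisation of neutral elements: an element $a$ of a lattice is neutral iff it satisfies both distributive identities and the map $\kappa_a\colon x\mapsto(x\wedge a,\,x\vee a)$ is injective --- for then $\kappa_a$ embeds the lattice into $(a]\times[a)$ carrying $a$ to $(a,a)$, which is neutral there (top of $(a]$, bottom of $[a)$, and an element of a direct product is neutral iff both components are), and neutrality passes to sublattices. All three ingredients reduce to one move: if $P\in\Ext(\context)$ with $P=R\cap Q$ and $P\subsetneq R,Q$ for some $R\in\mathcal{R}$ and $Q\in\Ext(\context)$, then $R,Q$ are incomparable, so $iii)$ applied to the pair $R,Q$ (one of which lies in $\mathcal{R}$) yields $P\in\mathcal{R}$. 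Injectivity of $\kappa_{\mathcal{R}}$: if $\mathcal{X}\cap\mathcal{R}=\mathcal{Y}\cap\mathcal{R}$, $\mathcal{X}\vee\mathcal{R}=\mathcal{Y}\vee\mathcal{R}$ and $A\in\mathcal{X}\setminus\mathcal{Y}$, then $A\in\mathcal{X}\vee\mathcal{R}=\mathcal{Y}\vee\mathcal{R}$ gives $A=Y\cap R$; here $A\notin\mathcal{R}$ (else $A\in\mathcal{X}\cap\mathcal{R}=\mathcal{Y}\cap\mathcal{R}\subseteq\mathcal{Y}$) and $A\notin\mathcal{Y}$ force $A\subsetneq Y,R$, so the move gives $A\in\mathcal{R}$, a contradiction. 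The identity $\mathcal{R}\wedge(\mathcal{X}\vee\mathcal{Y})=(\mathcal{R}\wedge\mathcal{X})\vee(\mathcal{R}\wedge\mathcal{Y})$: for $D=X\cap Y$ in the left side, either $X,Y$ are comparable and $D\in\{X,Y\}\subseteq(\mathcal{R}\cap\mathcal{X})\cup(\mathcal{R}\cap\mathcal{Y})$, or they are incomparable with $D\in\mathcal{R}$, so $iii)$ puts $X,Y\in\mathcal{R}$ and $D\in(\mathcal{R}\cap\mathcal{X})\vee(\mathcal{R}\cap\mathcal{Y})$. The identity $\mathcal{R}\vee(\mathcal{X}\wedge\mathcal{Y})=(\mathcal{R}\vee\mathcal{X})\wedge(\mathcal{R}\vee\mathcal{Y})$: writing an element $D$ of the right side as $D=R\cap X=R\cap Y$ with a common $R\in\mathcal{R}$ (replace the two witnessing members of $\mathcal{R}$ by their intersection) and splitting on whether $D$ equals $R$, equals one of $X,Y$, or is strictly below both --- applying the move whenever the two sets at hand are incomparable, the remaining subcases being immediate or contradictory --- gives $D\in\mathcal{R}\cup(\mathcal{X}\cap\mathcal{Y})\subseteq\mathcal{R}\vee(\mathcal{X}\wedge\mathcal{Y})$. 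The reverse inclusions hold in any lattice.

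The conceptual content is slight: every step channels through the single ``incomparable-meet'' application of $iii)$. The main obstacle is therefore bookkeeping --- getting the case split in the last distributive identity complete and correct, and quoting the neutral-element characterisation in exactly the form used --- plus recording the routine fact that in a direct product an element is neutral precisely when both of its components are, which is what makes the embedding $\kappa_{\mathcal{R}}$ argument go through.
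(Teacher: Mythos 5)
Your proof is correct, but it takes a genuinely different route from the paper's. The paper disposes of the corollary in one line: it transports the question along the dual isomorphism $i:\downarrow\Ext(\context)\to\Phi(\Ext(\context))$ of \cref{lem:dual-iso} and then cites Morgado's Theorems 1 and 2 on distributive and neutral closure operators, so that i)$\Leftrightarrow$ii) and i)$\Leftrightarrow$iii) are imported wholesale. You instead work directly inside $\downarrow\Ext(\context)$, using the concrete meet ($\cap$), join (pairwise intersections) and atoms from \cref{prop:join}, and prove the cycle ii)$\Rightarrow$i)$\Rightarrow$iii)$\Rightarrow$ii) by hand; the only external ingredient is the standard Gr\"atzer--Schmidt characterisation of neutral elements via the two distributive identities plus injectivity of $x\mapsto(x\wedge a,\,x\vee a)$. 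Your reading of iii) with $A,B,C\in\Ext(\context)$ rather than $\in\mathcal{R}$ is the right one --- as printed the condition is vacuous --- so you have in effect also repaired a typo in the statement. What the paper's route buys is brevity and a clean link to the closure-operator literature; what yours buys is a self-contained, checkable argument that does not require the reader to verify that Morgado's hypotheses and conclusions really translate under $i$ into conditions i)--iii), and it makes explicit where the incomparability hypothesis in iii) is actually used. I checked your case analysis (the three failure patterns in i)$\Rightarrow$iii), closure of $\mathcal{R}$ under $\cap$ ruling out the fourth, and the reduction to a common witness $R$ in the join-distributive identity) and found no gaps.
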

\begin{proof}
  Using \cref{lem:dual-iso},
  i)$\Leftrightarrow$ii) due to Thm.\,2 \cite{morgado1963distributive} and 
  i)$\Leftrightarrow$iii) due to  Thm.\,1 \cite{morgado1963distributive}. 
\end{proof}

An additional accompanying property is that the set of neutral
elements of $\downarrow\Ext(\context)$ is a complete
lattice~\cite{morgado1964central}. Thus, the iterative procedure
that results from~ \cref{rem:neutral}, iii) yields a closure operator on
$\downarrow\Ext(\context)$ to compute the neutral elements. To nourish our
understanding of the neutral elements take the following example: in
the lattice $\mathfrak{F}_G$ are only the top and bottom elements
neutral~\cite[Proposition 33 (5)]{caspard03}. In contrast, take the
chain $\mathcal{C}\subseteq \mathfrak{F}_G$ with $G\in \mathcal{C}$,
for which $\downarrow\Ext(\context_{\mathcal{C}})$ is a distributive
lattice, hence, every element is neutral.


\section{Recommending Conceptual Scale-Measures}\label{sec4}

Our theoretical findings unvails several possibilities to recommend
scale-measures. First, there are meet- and join-irreducible elements
of the scale-hierarchy (\cref{prop:meet,prop:join}). These elements
are a minimum representation from which every other scale-measure can
be retrieved. However, the number of meet- and join-irreducible
elements is in the size of the concept lattice $\BV(\context)$
(\cref{prop:join}) and thereby potentially exponential large.  Hence,
it is necessary to narrow down the set of join-irreducible scale-measures, 
for example, by constraining the selection  to irreducible elements in
$\BV(\context)$ or by applying conceptual importance measure. 

Other scale-measures of interest can be depicted based on their
structural placement in the scale-hierarchy, i.e., element-wise
modularity, distributivity, or neutrality. A further advantage of
latter two selection methods is that they allow a decomposition of the
scale-hierarchy using divide-and-conquer strategies. The existence of
such neutral elements, however, cannot be guarantied, as it can be
observed in $\mathfrak{F}_G$.
When a starting scale-measure $(\sigma,\Scon)$ is selected, an obvious
choice is to recommend the join-complemented scale-measure
(\cref{prop:props}), i.e., the minimum scale-measure such that the
join with $(\sigma,\Scon)$ yields $\Ext(\context)$.  The said
join-complemented scale-measure can then be used as additional
information or be the starting point for a thorough search.

In general, whenever multiple scale-measures of interest
$\left\{(\sigma_j,\Scon_j)\right\}_{j\in J}$ are selected, we are able to
combine all those  by the apposition of scale-measures (\cite[Proposition
  19]{navimeasure}) to combine their conceptual views on the data set.

\subsection{Exploration}\label{sec:explore}

\begin{algorithm}[t]
          \SetKwInOut{Input}{Input}
          \SetKwInOut{Output}{Output}
          \SetKwInOut{Return}{return}

          \Input{ Context $\context = (G,M,I)$}
          \Output{$(id_G,\Scon)\in \Sh(\context)$ and optionally $\mathcal{L}_{\Scon}$}
          Init Scale $\Scon = (G,\emptyset,\in)$\color{gray!60}\\
          Init $A= \emptyset$,\color{black}
          $\mathcal{L}_{\Scon}=\text{CanonicalBase}(\context)$ (or
          $\mathcal{L}_{\Scon}=\{\}$ for larger contexts)\color{gray!60}\\
        \While{ $A\neq G$}{
        \While{$A\neq A^{I_{\Scon}I_{\Scon}})$}{\color{black}
        \eIf{Can 
          $A^{I_{\context}} \setminus (A)^{I_{\Scon}I_{\Scon}I_{\context}}$
          for objects having $(A)^{I_{\Scon}I_{\Scon}I_{\context}}$ be neglected?}{
          \color{gray!60}
         $\mathcal{L}_{\Scon}=\mathcal{L}_{\Scon}\cup \{A\to A^{I_{\Scon}I_{\Scon}}\}$ \\
         Exit While \\
         }{ \color{black}
           Enter $B\subseteq A^{I_{\context}} \setminus
           (A)^{I_{\Scon}I_{\Scon}I_{\context}}$ that should be
           considered\\ Add attribute $B^{I_{\context}}$ to
           $\Scon$\color{gray!60} } }

       $A = $\texttt{Next\_Closure}$(A,G,\mathcal{L}_{\Scon})$ }
        \color{black} \Return{$(id_{G},\Scon)$ and optionally
          $\mathcal{L}$}
       \caption{Scale-measure Exploration: A modified Exploration with
         Background Knowledge}
       \label{alg:expl}
\end{algorithm}

For the task of efficiently determining a scale-measure, based on
human preferences, we propose the following approach. Motivated by the
representation of meet-irreducible elements in the scale-hierarchy
through object implications of the context (\cref{prop:meet}), we
employ the dual of the \emph{attribute exploration} algorithm
(\cite{exploration}) by \citeauthor{exploration}. We modified said
algorithm toward exploring scale-measures and present its pseudo-code
in~\cref{alg:expl}. In this depiction we highlighted our modifications
with respect to the original exploration algorithm (Algorithm 19,
\cite{concexpl}) with darker print. This algorithm semi-automatically
computes a scale context $\Scon$ and its canonical base. In each
iteration of the inner loop of our exploring algorithm the query that
is stated to the \emph{scaling expert} is if an object implication
$A\implies B$ is true in the closure system of preferences. If the
implication holds, it is added to the implicational base of $\Scon$
and the algorithm continues with the next implication query. Otherwise
a counter example in the form of a closed set $C\in \Ext(\context)$
with $A\subseteq C$ but $B\not\subseteq C$. This closed set is then
added as attribute to the scale context $\Scon$ with the incidences
given by $\in$. If $C\not\in\Ext(\context)$ the scale $\Scon$ would
contradict the scale-measure property (Proposition 20,
\cite{navimeasure}).

The object implicational theory $L_\Scon$ is initialized to the object
canonical base of $\context$, which is an instance of according to
attribute exploration with background knowledge
\cite{exploration}. This initialization can be neglected for larger
contexts, however it may reduce the number of queries. The algorithm
terminates when the implication premise of the query is equal to $G$.
The returned scale-measure is in canonical form, i.e., the canonical
representation $(id_G,(G,\Ext(\Scon),\in))$ (\cref{prop:eqi-scale}).
The motivation behind attribute exploration queries is to determine if
an implication holds in the unknown representational context of the
learning domain. In contrast, the exploration of scale-measures
determines if a given $\Ext(\context)$ can be coarsened by
implications $A\implies B$, resulating in a smaller and thus more
human comprehensible concept lattice $\BV(\Scon)$, adjusted to the
preferences (or view) of the scaling expert.

Querying object implications may be less intuitive compared to
attribute implications, hence, we suggest to rather not test for
$A\implies A^{I_{\Scon}I_{\Scon}}$ for $A\subseteq G$ but to test if
the difference of the intents $A^{I_\mathbb{K}}$ and
$(A^{I_{\Scon}I_{\Scon}})'$ in $\context$, is of relevance to the
scaling expert. Finally, as a post-processing, one may apply the
\emph{conjunctive normalform}~\cite[Proposition 23]{navimeasure} of
scale-measures to further increase the human-comprehension. Yet,
deriving other human-comprehensible representations of scale-measures
is deemed future work.

\subsection{(Semi-)Automatic Large Data Set
  Scaling}
\label{sec:automatic}
\label{sec:5}
\begin{figure}
  \centering 
\tiny
\begin{tikzpicture}
\node at (0,0){
\rotatebox{90}{\begin{tabular}{|l|l|lcr|c|l|}
                 \hline (Object) Premise (1) & (Object) Conclusion (2) & \multicolumn{3}{c|}{Attribute Question}  & CE? & Edit\\  \hline 
&&\ $(2)'$&&  $(1)'\setminus (2)'$&&\\
\hline \hline
$\{\}$&$\{\text{\color{blue} D, FL, Co, Br, WW, Be, F, R}\}$
&$\{\text{\color{red}W}\}$: & add more detail using&
$\{\text{\color{red}L, BF, Ch, LL, LW, M, MC, DC}\}$?&$\{\text{\color{red}M}\}$
&Add \hfill$\{\text{\color{blue} D, FL, Br, F}\}$ to $M_\Scon$\\ 
\hline
&$\{\text{\color{blue}D, FL, Br, F}\}$ & $\{\text{\color{red}W, M}\}$
: &add more detail using& $\{\text{\color{red}L, BF, Ch, LL, LW, MC, DC}\}$?
&$\{\text{\color{red}LL}\}$&Add \hfill$\{\text{\color{blue}D, F}\}$ to $M_\Scon$\\
\hline
&$\{\text{\color{blue}D, F}\}$&$\{\text{\color{red}L, W, LL, M}\}$:& add more detail using &$\{\text{\color{red}BF, Ch, LW, MC, DC}\}$?&$\{\text{\color{red}BF}\}$&Add \hfill$\{\text{\color{blue}D}\}$ to $M_\Scon$\\ 
\hline
&$\{\text{\color{blue}D}\}$&$\{\text{\color{red}L, BF, W, LL, M}\}$:& add more detail using& $\{\text{\color{red}Ch, LW, MC, DC}\}$?&$\{\text{\color{red}LW}\}$&Add \hfill$\{\}$ to $M_\Scon$\\ 
\hline
&$ \{\}$&&&&&\\ 
\hline
$\{\text{\color{blue}R}\}$&$\{\text{\color{blue}D, FL, C, Br, WW, Be, F, R}\}$&$\{\text{\color{red}W}\}$:& add more detail using &$\{\text{\color{red}Ch, LL, LW, MC}\}$?&$\{\text{\color{red}Ch}\}$&Add \hfill $\{\text{\color{blue} C, WW, Be, R}\}$ to $M_\Scon$\\ 
\hline
&$\{\text{\color{blue} C, WW, Be, R}\}$&$\{\text{\color{red}W, Ch}\}$:& add more detail using &$\{\text{\color{red}LL, LW, MC}\}$?&$\{\text{\color{red}LL}\}$&Add \hfill $\{\text{\color{blue}C, Be, R}\}$ to $M_\Scon$\\ 
\hline
&$\{\text{\color{blue} C, Be, R}\}$&$\{\text{\color{red}W, Ch, LL}\}$:& add more detail using &$\{\text{\color{red}LW, MC}\}$?&$\{\text{\color{red}LW}\}$&Add \hfill $\{\text{\color{blue}R}\}$ to $M_\Scon$\\ 
\hline
&$\{\text{\color{blue}R}\}$&&&&&\\ 
\hline
$\{\text{\color{blue}F}\}$ &$\{\text{\color{blue}F, D}\}$ & $\{\text{\color{red}L, W, LL, M}\}$:& add more detail using&$\{\text{\color{red}LW}\}$? &$\{\text{\color{red}LW}\}$ &Add \hfill $\{\text{\color{blue}F}\}$ to $M_\Scon$\\ 
\hline
&$\{\text{\color{blue}F}\}$&&&&&\\ 
\hline
$\{\text{\color{blue}F, R}\}$ &$\{\text{\color{blue}D, FL, Co, Br, WW, Be, F, R}\}$ & $\{\text{\color{red}W}\}$:& add more detail using&$\{\text{\color{red}LW, LL}\}$? &$\{\text{\color{red}LW}\}$ &Add \hfill $\{\text{\color{blue}FL, Br, WW, F, R}\}$ to $M_\Scon$\\ 
\hline
&$\{\text{\color{blue}FL, Br, WW, F, R}\}$ & $\{\text{\color{red}W}\}$:& add more detail using&$\{\text{\color{red}LW, LL}\}$? &no&Add\hfill $(1){\implies} (2)$ to $\mathcal{L}_{\Scon}$\\ 
\hline
$\{\text{\color{blue}F, Br}\}$ &$\{\text{\color{blue}FL, Br, F}\}$ & $\{\text{\color{red}W, LW, M}\}$:& add more detail using&$\{\text{\color{red}L}\}$? &no&Add\hfill $(1){\implies} (2)$ to $\mathcal{L}_{\Scon}$\\ 
\hline
$\{\text{\color{blue}FL, Br, F}\}$&$\{\text{\color{blue}FL, Br, F}\}$&&&&&\\ 
\hline
$\{\text{\color{blue}WW, R}\}$&$\{\text{\color{blue}WW, R}\}$&&&&&\\ 
\hline
$\{\text{\color{blue}FL, Br, WW, F, R}\}$&$\{\text{\color{blue}FL, Br, WW, F, R}\}$&&&&&\\ 
\hline
$\{\text{\color{blue}D}\}$&$\{\text{\color{blue}D}\}$&&&&&\\ 
\hline
$\{\text{\color{blue}D, F}\}$&$\{\text{\color{blue}D, F}\}$&&&&&\\ 
\hline
$\{\text{\color{blue}D, FL, Br, F}\}$&$\{\text{\color{blue}D, FL, Br, F}\}$&&&&&\\ 
\hline
$\{\text{\color{blue}Co, R}\}$ &$\{\text{\color{blue}Co, Be, R}\}$ & $\{\text{\color{red}W, Ch, LL}\}$:& add more detail using&$\{\text{\color{red}Mc}\}$? &no&Add\hfill $(1){\implies} (2)$ to $\mathcal{L}_{\Scon}$\\ 
\hline
$\{\text{\color{blue}Be}\}$ &$\{\text{\color{blue}Co, Be, R}\}$ & $\{\text{\color{red}W, Ch, LL}\}$:& add more detail using&$\{\text{\color{red}Dc}\}$? &no&Add\hfill $(1){\implies} (2)$ to $\mathcal{L}_{\Scon}$\\ 
\hline
$\{\text{\color{blue}Co, Be, R}\}$&$\{\text{\color{blue}Co, Be, R}\}$&&&&&\\ 
\hline
$\{\text{\color{blue}Co, WW, Be, R}\}$&$\{\text{\color{blue}Co, WW, Be, R}\}$&&&&&DONE\\ 
\hline
  \end{tabular}}};
\node at (6,-3){
\rotatebox{90}{
  \scalebox{1}{\scriptsize
      \begin{cxt}
        \cxtName{}
        \att{{$\{\text{\color{blue}D, FL, Br, F}\}$}}
        \att{{$\{\}$}}
        \att{{$\{\text{\color{blue}Co, WW, Be, R}\}$}}
        \att{{$\{\text{\color{blue}Co, Be, R}\}$}}
        \att{{$\{\text{\color{blue}FL, Br, WW, F, R}\}$}}
        \att{{$\{\text{\color{blue}D, F}\}$}}
        \att{{$\{\text{\color{blue}D}\}$}}
        \att{{$\{\text{\color{blue}F}\}$}}
        \att{{$\{\text{\color{blue}R}\}$}}
        \obj{x....xx..}{dog\hfill ({\color{blue}D})}
        \obj{x...x....}{fish leech\hfill ({\color{blue}FL})}
        \obj{..xx.....}{corn\hfill ({\color{blue}Co})}
        \obj{x...x....}{bream\hfill ({\color{blue}Br})}
        \obj{..x.x....}{water weeds\hfill ({\color{blue}WW})}
        \obj{..xx.....}{bean\hfill ({\color{blue}Be}) }
        \obj{x...xx.x.}{frog\hfill ({\color{blue}F}) }
        \obj{..xxx...x}{reed\hfill ({\color{blue}R}) }
      \end{cxt}}}};
\node at (6,7){
  \rotatebox{90}{\colorlet{mivertexcolor}{black!80}
\colorlet{jivertexcolor}{black!80}
\colorlet{vertexcolor}{black!80}
\colorlet{bordercolor}{black!80}
\colorlet{linecolor}{gray}
\tikzset{vertexbase/.style={semithick, shape=circle, inner sep=2pt, outer sep=0pt, draw=bordercolor},%
  vertex/.style={vertexbase, fill=vertexcolor!45},%
  mivertex/.style={vertexbase, fill=mivertexcolor!45},%
  jivertex/.style={vertexbase, fill=jivertexcolor!45},%
  divertex/.style={vertexbase, top color=mivertexcolor!45, bottom color=jivertexcolor!45},%
  conn/.style={-, thick, color=linecolor}%
}
\tikzstyle{n} = [text width=1.3cm,align=center]
\begin{tikzpicture}[scale=0.2,font=\tiny]
  \begin{scope} 
    \begin{scope} 
      \foreach \nodename/\nodetype/\xpos/\ypos in {%
        0/vertex/0.0/0.0,
        1/jivertex/-4.0/8.0,
        2/jivertex/4.0/8.0,
        4/divertex/-8.0/10.0,
        5/jivertex/-3.0/11.0,
        6/jivertex/3.0/11.0,
        7/mivertex/-7.0/13.0,
        8/mivertex/7.0/13.0,
        9/mivertex/-6.0/16.0,
        10/mivertex/0.0/16.0,
        11/mivertex/6.0/16.0,
        12/vertex/0.0/24.0
      } \node[\nodetype] (\nodename) at (\xpos, \ypos) {};
    \end{scope}
    \begin{scope} 
      \path (6) edge[conn] (10);
      \path (1) edge[conn] (5);
      \path (7) edge[conn] (9);
      \path (0) edge[conn] (2);
      \path (4) edge[conn] (7);
      \path (5) edge[conn] (9);
      \path (6) edge[conn] (11);
      \path (0) edge[conn] (1);
      \path (5) edge[conn] (10);
      \path (8) edge[conn] (11);
      \path (9) edge[conn] (12);
      \path (0) edge[conn] (4);
      \path (1) edge[conn] (7);
      \path (2) edge[conn] (6);
      \path (2) edge[conn] (8);
      \path (10) edge[conn] (12);
      \path (11) edge[conn] (12);
    \end{scope}
    \begin{scope} 
      \foreach \nodename/\labelpos/\labelopts/\labelcontent in {%
        0/above/n/{$\{\}$},                                    
        1/below/n/{frog},
        2/below/n/{reed},
        2/above/n/{$\{${\color{blue}R}$\}$},
        4/below/n/{dog},
        4/above/n/{$\{${\color{blue}D}$\}$},
        5/below/n/{fish leech, bream},
        6/below/n/{water weeds},
        8/below/n/{corn, bean},
        8/above//{{$\{$\color{blue}Co, Be, R$\}$}}, 
        9/above/n/{{$\{$\color{blue}D, FL, Br, F$\}$}}, 
        10/above/n/{{$\{$\color{blue}FL, Br, WW, F, R$\}$}}, 
        11/above/n/{{$\{$\color{blue}Co, WW, Be, R$\}$}}, 
        4/above/n/{{$\{$\color{blue}D$\}$}}, 
        7/above//{{$\{$\color{blue}D, F$\}$}} 
      } \coordinate[label={[\labelopts]\labelpos:{\labelcontent}}](c) at (\nodename);
    \end{scope}
  \end{scope}
\end{tikzpicture}}};
\end{tikzpicture}
  \caption{Scale-measure exploration results (left) for the
    \emph{Living Beings and Water}
    context, the resulting context (bottom right) and its concept
    lattice (top right). \\The employed object order is:
    $\text{\color{blue} Be} > \text{\color{blue} Co} > \text{\color{blue} D}
    > \text{\color{blue} WW} > \text{\color{blue} FL} > \text{\color{blue} Br} > \text{\color{blue} F} > \text{\color{blue} R}$}
  \label{fig:expl-table}
\end{figure}

To demonstrate the applicability of the presented exploring algorithm,
we have implemented it in the \texttt{conexp-clj}(\cite{conexp})
software suite for formal concept analysis.For this, we apply the
scale-measure exploration \cref{alg:expl} on our running example
$\context_{W}$, see~\cref{fig:bj1}. In~\cref{fig:expl-table} (left) we
depicted the evaluation steps of algorithm, the first two columns
represent the object implication that is queried, the third column
contains the query translated in terms of attributes. For example, in
row two the implication $\{\}\implies \{\text{\color{blue}D},
\text{\color{blue}FL}, \text{\color{blue}Br}, \text{\color{blue}F}\}$
is true in the so far generated scale $\Scon$ and is queried if it
should hold.  All objects of the implication do have at least the
attribues \emph{can move} and \emph{needs water to live}, as indicated
in the third column (left). In the same column (right) we find
attributes from $(1)^{I_{\context}}\setminus
(2)^{I_{\context}}\subseteq M_{W}$ that can be considered by the
scaling expert to narrow the object implication, i.e., to shrinken the
size of the conclusion. The by us envisioned answer of the scaling
expert is given in column four, the attribute \emph{lives on
land}. Thus, the object counter example is then the
attribute-derivation the union $\{\text{\color{red} M},
\text{\color{red}W}, \text{\color{red}
  LL}\}^{I_W}=\{\text{\color{blue}D},\text{\color{blue}F}\}$. In our
example of the scale-measure exploration the algorithm terminates
after the scaling expert provided in total nine counter examples and four accepts.  The
output is a scale context in canonical representation with
twelve concepts as depicted in~\cref{fig:expl-table} (right).

The just demonstrated application of the scale-measure exploration can
be supported in every steop by conceptual importance
measures~\cite{measures}.  Furthermore, these measures can also be
used to automate the exploration algorithm by randomly selecting the
counterexample from the top-k of the list of outstanding concepts with
respect to one or more of said conceptual measures. We show illustrate
this idea for the spices planer data
set~\cite{herbs,pqcores,navimeasure} and depict the resulting scale-measure
in~\cref{fig:rec}. This data set is comprised of 56 dishes (objects)
and 37 spices (attributes), resulting in the context
$\context_{\text{Spices}}$. The dishes are picked from multiple
categories, such as vegetables, meats, or fish dishes. The incidence
$I_{\context_{\text{Spices}}}$ indicates that a spice $m$ is necessary
to cook dish $g$. The concept lattice of $\context_{\text{Spices}}$
has 421 concepts and is therefore too large for a meaningful human
comprehension. Thus, using our automatic approach for scale-measure
recommendation, we are able to generate a small-scaled view of
readable size.

For this example of automatic scale-measure exploration, we considered
the importance measure \emph{separation index}~\cite{measures, prob}
on the set of objects. We consider the maximum number of concepts that
are human readable to be thirty and therefore we restricted the number
of counter examples to be computed accodingly.  We depicted the
concept lattice of the resulting scale-measure in \cref{fig:rec} using
the conjunctive normalform.  To improve the readability, we only
annotated meet-irreducible attribute concepts in the lattice diagram
and omitted redundant attribute conjunctions, e.g., for
Anis$\wedge$Vanilla$\wedge$Cinnamon$\wedge$Pastry we annotate
$...\wedge$Pastry, since Anis$\wedge$Vanilla$\wedge$Cinnamon is
already given by an upper neighbor. The so given scale-measure concept
lattice seems empirically more human readable and displays extensive
information with respect to the original data set
$\context_{\text{Spices}}$ and the employed importance measure.

Based on this approach, we propose a comprehensive study that
specifically examines the use of the different importance measures in
relation to the data domains used. Such a study would, of course, go
beyond the scope of this paper. Another approach to improve the
automatic scaling process could be the removal of irrelevant
attributes (\cite{DBLP:conf/iccs/HanikaKS19}). Other selection
criteria could regard the distributivity of concepts, since
distributive lattices are known to have easy readable drawings.
Another, line of research with respect to improving the automatic
scaling with our algorithm regards the logical representation of the
scale-measure attributes. In presented work, we use the conjunctive
normalform, but future work may investigate new and additional logical
representations.

\begin{figure}
  \centering
  \scalebox{0.43}{\colorlet{mivertexcolor}{black!80}
\colorlet{jivertexcolor}{black!80}
\colorlet{vertexcolor}{black!80}
\colorlet{bordercolor}{black!80}
\colorlet{linecolor}{gray}
\tikzset{vertexbase/.style={semithick, shape=circle, inner sep=2pt, outer sep=0pt, draw=bordercolor},%
  vertex/.style={vertexbase, fill=vertexcolor!45},%
  mivertex/.style={vertexbase, fill=mivertexcolor!45},%
  jivertex/.style={vertexbase, fill=jivertexcolor!45},%
  divertex/.style={vertexbase, top color=mivertexcolor!45, bottom color=jivertexcolor!45},%
  conn/.style={-, thick, color=linecolor}%
}
\tikzstyle{o} = [text width=2cm,align=center]
\tikzstyle{n} = [text width=2.2cm,align=center]
\tikzstyle{l} = [text width=4cm, label distance=1cm,align=center]
\tikzstyle{m} = [text width=2.5cm,align=center]
\begin{tikzpicture}[scale=0.5,font=\footnotesize]
  \begin{scope} 
    \begin{scope} 
      \foreach \nodename/\nodetype/\xpos/\ypos in {%
        0/vertex/0.0/0.0,
        1/jivertex/-6.0/16.0,
        2/jivertex/0.0/16.0,
        3/jivertex/4.0/16.0,
        4/divertex/23.917664092664076/19.639382239382236,
        5/jivertex/-14.0/20.0,
        6/vertex/2.0/20.0,
        7/jivertex/-20.0/24.0,
        8/divertex/23.856081081081086/24.196525096525093,
        9/vertex/-14.0/26.0,
        10/jivertex/-2.0/26.0,
        11/divertex/19.606853281853276/26.659845559845557,
        12/jivertex/-25.0/27.0,
        13/vertex/-20.0/28.0,
        14/vertex/4.0/28.0,
        15/jivertex/9.0/29.0,
        16/divertex/24.0/30.0,
        17/mivertex/15.911872586872583/31.52490347490347,
        18/vertex/-4.0/32.0,
        19/mivertex/0.14662162162161252/33.00289575289575,
        20/mivertex/-24.609749034749033/33.74189189189188,
        21/mivertex/5.0/35.0,
        22/mivertex/9.0/35.0,
        23/mivertex/-18.0/36.0,
        24/mivertex/-10.0/36.0,
        25/divertex/23.917664092664097/36.51312741312741,
        26/mivertex/-3.0/39.0,
        27/mivertex/-10.0/40.0,
        28/mivertex/2.0/40.0,
        29/vertex/0.0/58.0
      } \node[\nodetype] (\nodename) at (\xpos, \ypos) {};
    \end{scope}
    \begin{scope} 
      \path (5) edge[conn] (13);
      \path (16) edge[conn] (25);
      \path (25) edge[conn] (29);
      \path (3) edge[conn] (17);
      \path (14) edge[conn] (22);
      \path (15) edge[conn] (22);
      \path (13) edge[conn] (27);
      \path (3) edge[conn] (6);
      \path (8) edge[conn] (16);
      \path (10) edge[conn] (21);
      \path (26) edge[conn] (29);
      \path (0) edge[conn] (7);
      \path (18) edge[conn] (26);
      \path (5) edge[conn] (9);
      \path (1) edge[conn] (9);
      \path (19) edge[conn] (26);
      \path (10) edge[conn] (24);
      \path (6) edge[conn] (10);
      \path (10) edge[conn] (19);
      \path (24) edge[conn] (27);
      \path (20) edge[conn] (29);
      \path (9) edge[conn] (26);
      \path (14) edge[conn] (19);
      \path (2) edge[conn] (23);
      \path (0) edge[conn] (4);
      \path (6) edge[conn] (15);
      \path (12) edge[conn] (23);
      \path (0) edge[conn] (3);
      \path (27) edge[conn] (29);
      \path (28) edge[conn] (29);
      \path (18) edge[conn] (28);
      \path (7) edge[conn] (13);
      \path (15) edge[conn] (21);
      \path (4) edge[conn] (8);
      \path (10) edge[conn] (18);
      \path (0) edge[conn] (11);
      \path (13) edge[conn] (20);
      \path (5) edge[conn] (18);
      \path (18) edge[conn] (27);
      \path (6) edge[conn] (14);
      \path (23) edge[conn] (29);
      \path (17) edge[conn] (29);
      \path (22) edge[conn] (29);
      \path (21) edge[conn] (28);
      \path (9) edge[conn] (20);
      \path (7) edge[conn] (24);
      \path (0) edge[conn] (5);
      \path (2) edge[conn] (6);
      \path (12) edge[conn] (20);
      \path (1) edge[conn] (14);
      \path (11) edge[conn] (17);
      \path (0) edge[conn] (1);
      \path (0) edge[conn] (12);
      \path (0) edge[conn] (2);
    \end{scope}
    \begin{scope} 
      \foreach \nodename/\labelpos/\labelopts/\labelcontent in {%
        1/below/n/{Fried-Fish},
        2/below/n/{Vegtables, Stew},
        3/below/n/{Dark-Sauce},
        4/below/n/{Bowle, Cake, Jam, Tea, Christmas-Pastry},
        4/above/n/{...$\wedge$Ginger},
        5/below/n/{Herb-Dip, Hash},
        6/below/n/{Fried-Potato, Pizza, Stove-Potato},
        7/below/n/{Duck, Pork},
        8/below/n/{Dessert},
        8/above/n/{...$\wedge$Cloves},
        9/below/n/{Asian-Rice},
        10/below/n/{Pasta},
        11/below/n/{Red-Cabbage, Sauerbraten, Wild},
        11/above/n/{...$\wedge$Cloves},
        12/below/n/{Veggie-Casserole},
        13/below/n/{Chicken},
        15/below/n/{Beef},
        16/below/n/{Fruit-Salad},
        16/above/n/{...$\wedge$Pastry$\wedge$Sweets},
        17/above/n/{Allspice$\wedge$Lorber$\wedge$ Juniper-Berries $\wedge$Pepper-Black},
        18/below/n/{Goulash},
        19/above/n/{...$\wedge$Thyme},
        20/below/n/{Steamed-Fish, Paella, White-Sauce, Grilled-Fish, Baked-Fish, Veal-Meat},
        20/above/n/{Curry$\wedge$Garlic $\wedge$Pepper-White},
        21/below/n/{Mushrooms},
        21/above/n/{...$\wedge$Thyme},
        22/below/n/{Lamb},
        22/above/n/{Thyme$\wedge$Rosemary$\wedge$ Garlic$\wedge$Cayenne},
        23/below/m/{Carrots, Green-Salad, Spinach, Broccoli, Cauliflower, Lentil-Soup, Cucumber-Salad, Beans, Sauerkraut, Tomato-Salad, Kohlrabi},
        23/above/n/{Vegetables $\wedge$Pepper-White},
        24/above/n/{...$\wedge$Thyme},
        25/below/n/{Rice-Pudding},
        25/above/n/{Anise$\wedge$Vanilla $\wedge$Cinnamon},
        26/above/n/{Paprika-Roses $\wedge$Garlic$\wedge$Cayenne},
        27/above/n/{Paprika-Roses$\wedge$Paprika-Sweet$\wedge$Garlic},
        28/above/n/{Paprika-Sweet$\wedge$Garlic $\wedge$Cayenne},
        29/below/l/{Omlette, Potato-Gratin, Puree, Cheese-Pastry, Potato-Soup, Shellfish, Roulades, Goose, Curry-Rice}
      } \coordinate[label={[\labelopts]\labelpos:{\labelcontent}}](c) at (\nodename);
    \end{scope}
  \end{scope}
\end{tikzpicture}}
  \caption{Automatically generated scale-measure of the spices context
    using the most outstanding concepts by the separation index
    importance measure. The scale has consists of 30 of the original
    421 concepts and is in conjunctive normalform.}
  \label{fig:rec}
\end{figure}
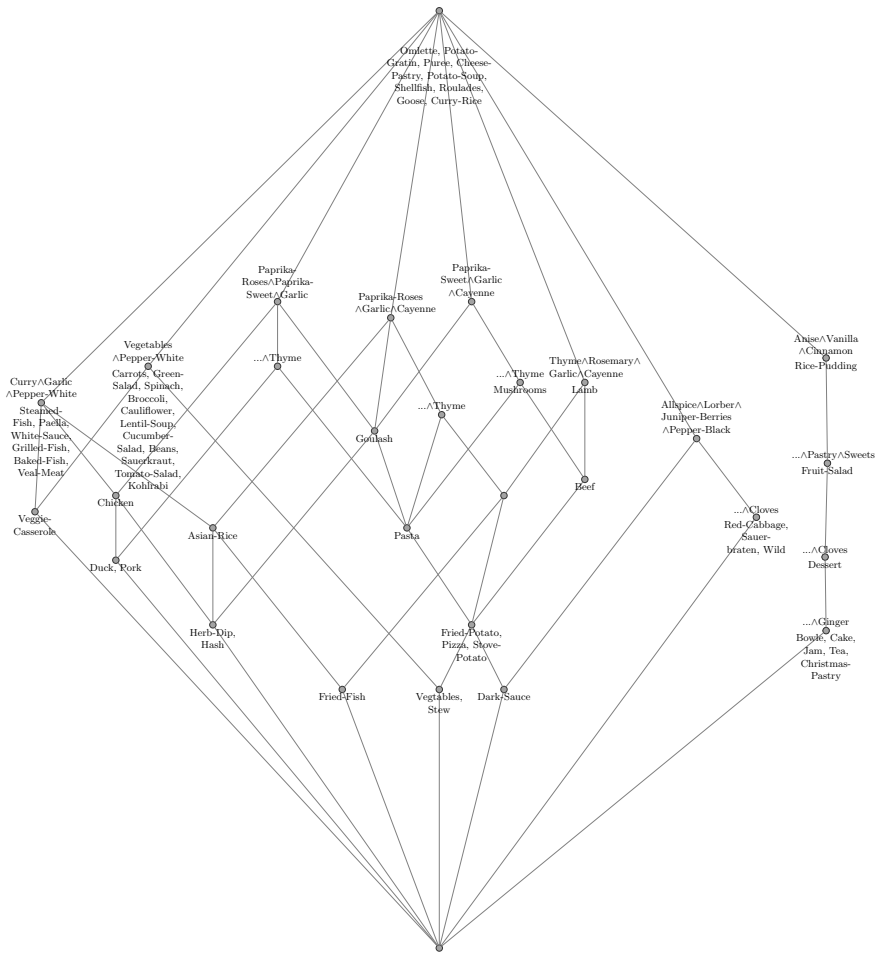

\section{Related Work}\label{sec:relate}
Measurement is an important field of study in many (scientific)
disciplines that involve the collection and analysis of
data. According to \citeauthor{stevens1946theory}
\cite{stevens1946theory} there are four feature categories that can be
measured, i.e., \emph{nominal}, \emph{ordinal}, \emph{interval} and
\emph{ratio} features. Although there are multiple extensions and
re-categorizations of the original four categories, e.g., most
recently~\citeauthor{Chrisman} introduced ten \cite{Chrisman}, for the
purpose of our work the original four suffice. Each of these
categories describe which operations are supported per feature
category. In the realm of formal concept analysis we work often with
\emph{nominal} and \emph{ordinal} features, supporting value
comparisons by $=$ and $<,>$. Hence grades of detail/membership
cannot be expressed. A framework to describe and analyze the
measurement for Boolean data sets has been introduced in
\cite{cmeasure} and \cite{scaling}, called \emph{scale-measures}. It
characterizes the measurement based on object clusters that are formed
according to common feature (attribute) value combinations. An
accompanied notion of dependency has been studied \cite{manydep},
which led to attribute selection based measurements of boolean
data. The formalism includes a notion of consistency enabling the
determination of different views and abstractions, called
\emph{scales}, to the data set. This approach is comparable to
\emph{OLAP}~\cite{olap} for databases, but on a conceptual
level. Similar to the feature dependency study is an approach for
selecting relevant attributes in contexts based on a mix of lattice
structural features and entropy
maximization~\cite{DBLP:conf/iccs/HanikaKS19}. All discussed
abstractions reduce the complexity of the data, making it easier to
understand by humans.

Despite the in this work demonstrated expressiveness of the
scale-measure framework, it is so far insufficiently studied in the
literature.  In particular algorithmical and practical calculation
approaches are missing. Comparable and popular machine learning
approaches, such as feature compressed techniques, e.g., \emph{Latent
  Semantic Analysis} \cite{lsa,lsaapp}, have the disadvantage that the
newly compressed features are not interpretable by means of the
original data and are not guaranteed to be consistent with said
original data. The methods presented in this paper do not have these
disadvantages, as they are based on meaningful and interpretable
features with respect to the original features using propositional
expressions. In particular preserving consistency, as we did, is not a
given, which was explicitly investigated in the realm scaling
many-valued formal contexts~\cite{logiscale} and implicitly studied for generalized
attributes~\cite{leonardOps}.

Earlier approaches to use scale contexts for complexity reduction in
data used constructs such as $(G_N\subseteq \mathcal{P}(N),N,\ni)$ for
a formal context $\context=(G,M,I)$ with $N\subseteq M$ and the
restriction that at least all intents of $\context$ restricted to $N$
are also intent in the scale~\cite{stumme99hierarchies}. Hence, the
size of the scale context concept lattice depends directly on the size
of the concept lattice of $\context$. This is particularly infeasible
if the number of intents is exponential, leading to incomprehensible
scale lattices. This is in contrast to the notion of scale-measures,
which cover at most the extents of the original context, and can
thereby display selected and interesting object dependencies of
scalable size.

\section{Conclusion}\label{conclusion}
With this work we have shed light on the hierarchy of
scale-measures. By applying multiple results from lattice theory,
especially concerning ideals, to said hierarchy, we were able to give
a more thorough structural description of
$\downarrow_{\mathfrak{F}_G}\Ext(\context)$. Our main theoretical
result is~\cref{prop:props}, which in turn leads to our practical
applications. In particular, based on this deeper understanding we
were able to present an algorithm for exploring the scale-hierarchy of
a contextual data set $\context$. Equipped with this algorithm a data
scaling expert may explore the lattice of scale-measures for a given
data set with respect to her preferences and the requirements of the
data analysis task. The practical evaluation and optimization of this
algorithm is a promising goal for future investigations. Even more
important, however, is the implementation and further development of
the automatic scaling framework, as outlined
in~\cref{sec:automatic}. This opens the door to empirical scale
recommendation studies and a novel approach for data preprocessing.
\printbibliography
\end{document}